\documentclass[sigconf]{acmart}
\AtBeginDocument{%
  }

\usepackage{empheq}
\usepackage{url}
\usepackage{algorithm}       % 伪代码浮动环境、标题、编号
\usepackage{algpseudocode}   % 伪代码语法关键字（if/for/while/function等）
\usepackage{multirow}
\usepackage{booktabs}
\usepackage{graphicx}
\usepackage{subcaption}
\usepackage{amsthm}

\newtheorem{theorem}{Theorem}[section]

\newtheorem{definition}[theorem]{Definition}
\newtheorem{assumption}[theorem]{Assumption}

\begin{document}

%%
%% The "title" command has an optional parameter,
%% allowing the author to define a "short title" to be used in page headers.
\title{LibraSpec: Dynamic Diffusion-Based Speculative Decoding via Marginal-Gain-Driven Optimization}

%%
%% The "author" command and its associated commands are used to define
%% the authors and their affiliations.
%% Of note is the shared affiliation of the first two authors, and the
%% "authornote" and "authornotemark" commands
%% used to denote shared contribution to the research.

%\authornote{Both authors contributed equally to this research.}
\author{Zexun Lin}
\affiliation{%
	\institution{Suzhou Institute for Advanced Research, University of Science and Technology of China}
	\city{Suzhou}
	\state{Jiangsu}
	\country{China}
}
\email{zexunlin@mail.ustc.edu.cn}

\author{Yuan Feng}
\affiliation{%
	\institution{Suzhou Institute for Advanced Research, University of Science and Technology of China}
	\city{Suzhou}
	\state{Jiangsu}
	\country{China}
}
\email{yfung@mail.ustc.edu.cn}

\author{Junlin Lv}
\affiliation{%
	\institution{Suzhou Institute for Advanced Research, University of Science and Technology of China}
	\city{Suzhou}
	\state{Jiangsu}
	\country{China}
}
\email{junlinlv@mail.ustc.edu.cn}

\author{Kevin S. Zhou}
\affiliation{%
	\institution{Suzhou Institute for Advanced Research, University of Science and Technology of China}
	\city{Suzhou}
	\state{Jiangsu}
	\country{China}
}
\email{skevinzhou@ustc.edu.cn}

\author{Xike Xie}
\correspondingauthor
%\authornotemark[1]
\affiliation{%
\institution{Suzhou Institute for Advanced Research, University of Science and Technology of China}
\city{Suzhou}
\state{Jiangsu}
\country{China}
}
\email{xkxie@ustc.edu.cn}

\begin{CCSXML}
<ccs2012>
   <concept>
       <concept_id>10010147.10010178.10010179</concept_id>
       <concept_desc>Computing methodologies~Natural language processing</concept_desc>
       <concept_significance>500</concept_significance>
       </concept>
 </ccs2012>
\end{CCSXML}

\ccsdesc[500]{Computing methodologies~Natural language processing}

%% A "teaser" image appears between the author and affiliation
%% information and the body of the document, and typically spans the
%% page.

% \received{20 February 2007}
% \received[revised]{12 March 2009}
% \received[accepted]{5 June 2009}

\begin{abstract}
    Speculative decoding accelerates large language model inference by drafting multiple tokens for parallel verification, with efficiency critically determined by the speculative length selected at each decoding round.
    Existing dynamic speculation methods select the speculation length by estimating how many tokens will be accepted, which is reasonable for autoregressive drafters that generates tokens sequentially. The recent wave of diffusion-based drafters, however, generates candidate blocks in parallel at substantially lower drafting cost, shifting the key question from how many tokens to generate to how many generated tokens are worth verifying.
    %The speculative length at each decoding round in effect strikes a trade-off between the cost and the gain of speculation, and dynamic-length methods seek to optimize this trade-off on the fly. Designed for autoregressive draft models, existing methods predict how many tokens will be accepted so as to ration both draft generation and verification. Diffusion-based draft models overturn this premise: draft tokens now arrive in abundant, nearly free blocks, leaving only the question of how much of the draft is worth verifying. 
    We therefore reformulate dynamic speculative-length selection as expected-speedup optimization and derive a marginal criterion that extends the speculative sequence only when its acceptance gain outweighs the additional verification cost.
    %We derive a marginal criterion under which a length adjustment improves expected speedup only when its acceptance gain justifies the additional verification cost.{\color{red}This renders dynamic length control substantially more demanding — it calls for explicitly weighing the verification cost against the acceptance gain, rather than merely judging whether each token is likely to be accepted. We model this trade-off from the perspective of marginal gain: an additional draft position is worth verifying only if its marginal gain per unit of verification cost exceeds the speedup already achieved.} 
    Building on this criterion, we develop \textit{LibraSpec}, 
    a training-free and plug-and-play algorithm that iteratively determines the speculative length using drafter confidence scores.
    %an iterative length-optimization algorithm built around \textit{the single beneficial adjustment} — a speculative length adjustment that provably improves the expected speedup. 
    Theoretically, we prove that LibraSpec monotonically converges toward the optimal speculative length. %without ever predicting it. 
    %We show that LibraSpec is a training-free, plug-and-play algorithm, integrating seamlessly with existing diffusion-based speculative decoding methods. 
    Experiments across six target models, three diffusion-based speculative decoding methods, and math, coding, and chat benchmarks show consistent improvements under both greedy and sampling settings, achieving a further $0.5\sim1.5\times$ improvement over baselines and up to $8.49\times$ speedup over autoregressive decoding. 
    %yielding an additional $0.5\sim1.5\times$ end-to-end speedup.
\end{abstract}

%%
%% This command processes the author and affiliation and title
%% information and builds the first part of the formatted document.
\maketitle

\section{Introduction}

Autoregressive large language models (LLMs) generate tokens sequentially, making decoding a major bottleneck in long-context reasoning, including code generation and interactive applications \citep{gpt5-openai,efficient-tay}. 
%have demonstrated expert-level performance on a wide range of long-context tasks, including complex reasoning and code generation \citep{gpt5-openai}. Nevertheless, their inherently sequential token-by-token generation process introduces a critical inference bottleneck, resulting in limited decoding throughput, memory-bound and increased latency \citep{efficient-tay}. To alleviate this issue, 
Speculative Decoding \citep{sd-leviathan,medusa-cai,eagle-li,eagle2-li,eagle3-li,dflash-chen, draft-verify-zhang, longspec-yang} alleviates this bottleneck with lossless acceleration with formal guarantees. %Specifically, it operates in three stages: a lightweight draft model first generates a sequence of candidate tokens, the target model then verifies these candidates in parallel, and finally, multiple valid tokens are accepted simultaneously based on a strict acceptance criterion.
At each decoding round, a lightweight draft model first proposes a sequence of candidate tokens, which are then verified in parallel by the target model; multiple valid tokens can consequently be accepted at once. The resulting speedup depends critically on the \emph{speculative length}, i.e., the number of candidate tokens submitted for verification. A short length underutilizes parallel verification, whereas an excessively long length wastes target-model computation when an early rejection invalidates the remaining suffix \citep{disco-mamou,ltd-zhang,blockpilot-zhang}. 
%Selecting the speculative length is therefore an expected-speedup optimization problem that balances the acceptance gain of additional draft tokens against their drafting and verification costs.
Because token predictability varies across decoding rounds, the speedup-maximizing length also changes throughout generation. This motivates \emph{dynamic speculative-length selection}: at each round, the system determines the length that best balances the acceptance gain of additional draft tokens against their computational cost.

\begin{figure*}[thbp]
	\centering
	\includegraphics[width=0.99\textwidth]{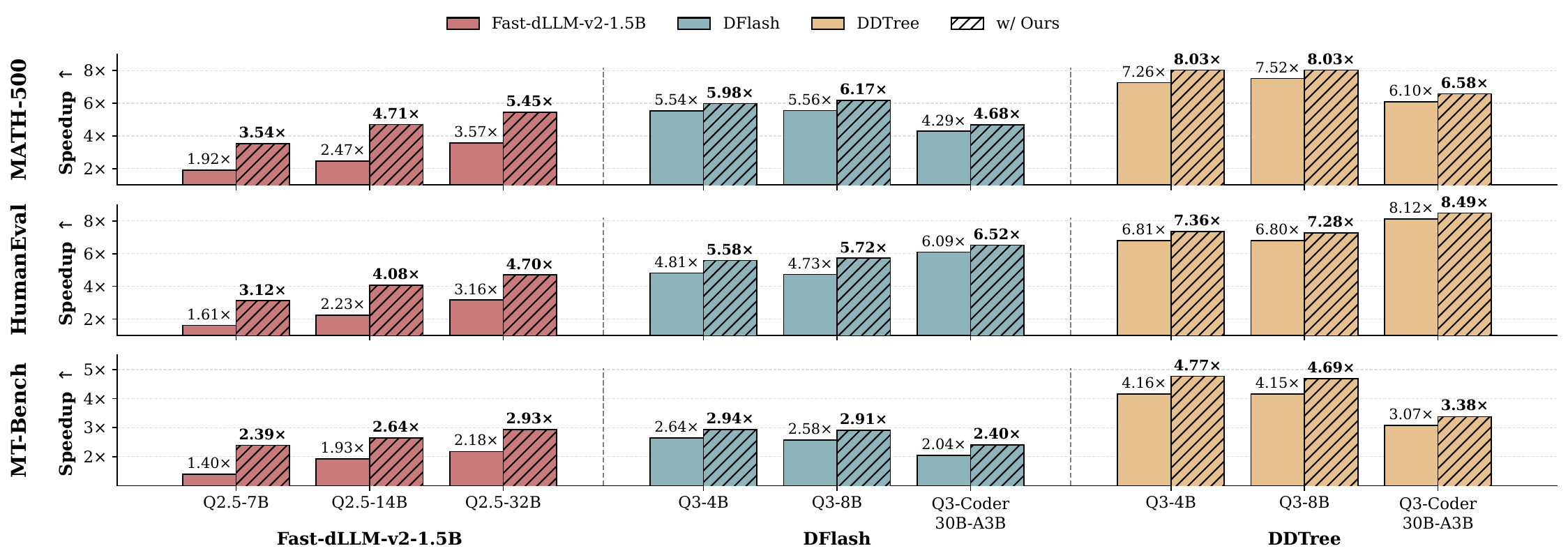}
	\caption{Speedup gains achieved by LibraSpec when integrated to different methods (see experiments for other benchmarks).}
	\label{fig:three_vertical}
\end{figure*}

%The end-to-end speedup of this mechanism is governed by a single critical control variable: the speculative length chosen at each decoding round. In effect, this length strikes a trade-off between the cost and the gain of speculation. Speculating too few tokens leads to "under-speculation," leaving the acceleration potential of parallel verification unexploited; speculating too many causes "over-speculation," where an early rejection instantly invalidates all subsequent draft tokens and the verification effort spent on them is wasted \citep{disco-mamou, ltd-zhang, blockpilot-zhang}. Crucially, the optimal point of this trade-off is not fixed — it fluctuates substantially across decoding steps as the difficulty of upcoming tokens varies (Figure \ref{fig:sl-comparison}), so any static choice is doomed to alternate between the two failure modes. A growing body of work therefore adjusts the speculative length dynamically during inference. Despite their diversity, these methods follow a common recipe: predicting the optimal length in one shot at each round, either by training an auxiliary predictor to forecast the accepted length , or by thresholding heuristic signals such as historical acceptance rates, confidence scores, and entropy. This paradigm was designed for the autoregressive regime in which it originated: by forecasting the number of tokens likely to be accepted, it imposes quota control over both draft generation and verification, economizing the expenditure of sequential, costly drafting along with the verification spent on tokens doomed to rejection.

Existing dynamic-length methods select the speculative length by predicting the accepted length, which is well-suited to autoregressive drafters because each additional candidate incurs sequential drafting cost. The recent wave of diffusion-based drafters, however, is reshaping speculative decoding by generating candidate blocks in parallel at substantially lower drafting cost \citep{dart-liu,dflash-chen}. This changes the optimization target: the key question shifts from how many tokens to generate to how many generated tokens are worth verifying. Consequently, estimating the expected accepted length alone is insufficient to determine whether extending the speculative prefix will improve speedup, as even a likely accepted extension may provide insufficient gain to justify its additional verification cost \citep{sssd-michele}. % {\color{red} @lin, citations or references to revised fig 1}

\begin{figure}[htbp]
    \centering
    \includegraphics[width=0.99\linewidth]{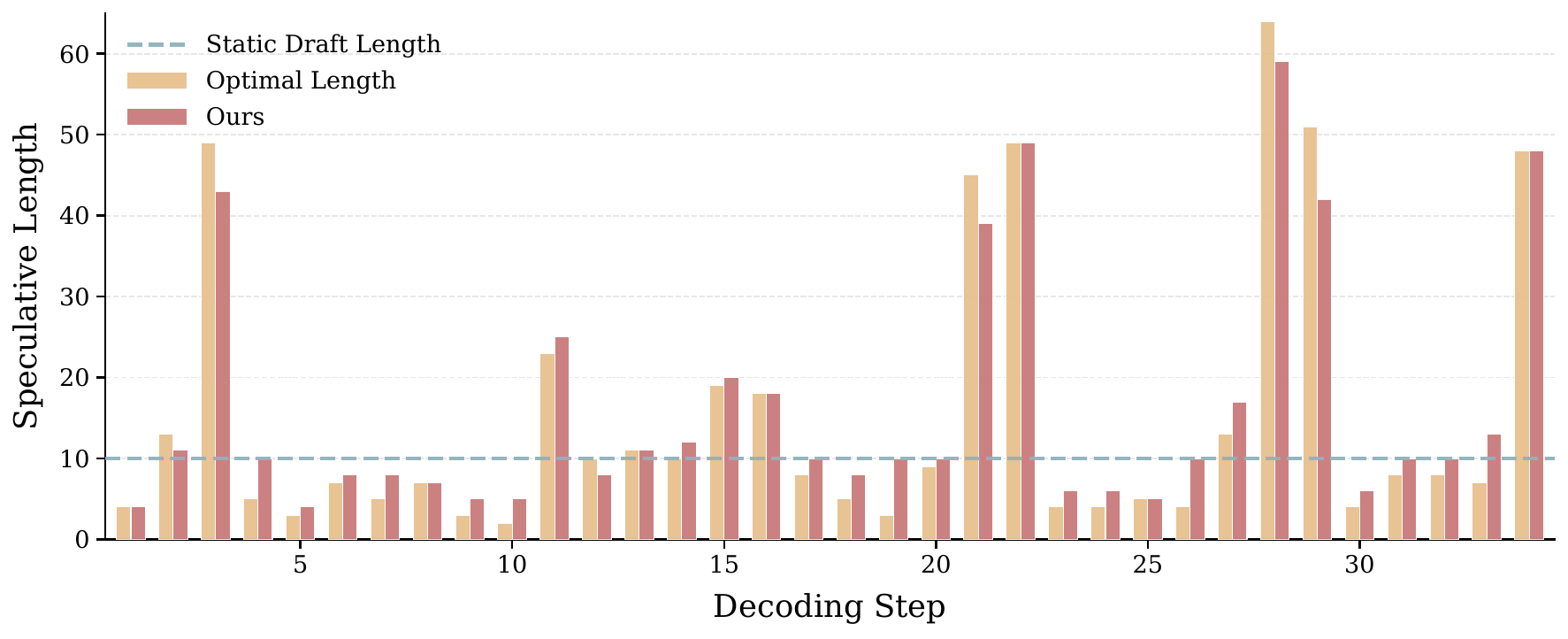}
    \caption{Static speculative length vs.\ LibraSpec vs.\ optimal speculative length, using the FastdLLM drafter with Qwen2.5-7B as the target model on MATH-500.}
    \label{fig:sl-comparison}
\end{figure}

%The emergence of diffusion-based draft models, however, overturns this premise. Block diffusion models decode entire blocks of masked tokens in parallel, rendering draft tokens abundant and virtually free \citep{dart-liu,dflash-chen}. 
%Half of the quota's purpose thus evaporates: drafting no longer needs rationing, and the decision that remains concerns verification alone — with a wealth of draft tokens already in hand, one must judge, position by position, whether admitting one more token into verification would raise the speedup beyond what is already attained. Yet the inherited paradigm, which only predicts whether draft tokens will be accepted, never weighs the cost of verifying them against the speedup they bring — a token that is likely to be accepted can still dilute the overall speedup.

%To address this mismatch, we formulate explicitly model the trade-off from the theoretical perspective of marginal gain and recast the problem as a marginal-gain-driven optimization. 
To address this mismatch, we reformulate dynamic speculative-length selection from an accepted-length prediction problem into an expected-speedup optimization problem.
Concretely, extending the speculative sequence improves expected speedup only when the added segment provides a higher acceptance-gain-to-verification-cost ratio than the current sequence. Although the globally optimal speculative length is not directly available in closed form, whether a local length adjustment improves expected speedup can be evaluated under our model. We formalize such a speedup-improving update as a \emph{beneficial adjustment} and derive necessary and sufficient conditions for determining its maximum admissible length. 
We further prove that expected speedup is unimodal with respect to the speculative length.
This structure recasts dynamic-length selection as iterative optimization: under our speedup model, successive beneficial adjustments monotonically improve expected speedup and converge in finitely many steps to a globally optimal speculative-length interval.

Based on this analysis, we propose \textsc{LibraSpec}, a training-free and plug-and-play algorithm for diffusion-based speculative decoding.
%We instantiate this scheme as LibraSpec, an iterative length-optimization algorithm built around the single beneficial adjustment and running entirely at inference time. 
At each decoding round, \textsc{LibraSpec} uses confidence scores produced by the diffusion drafter for the probabilities unavailable before verification. 
It then iteratively applies the marginal criterion to determine how many candidate tokens should be submitted for verification.
%estimates already available during drafting, and iterates to determine the final speculative length. 
This design requires neither an auxiliary predictor nor additional training and can be directly integrated with existing diffusion-based speculative decoding methods.
As illustrated in Figure~\ref{fig:sl-comparison}, \textsc{LibraSpec} closely tracks the oracle optimal length obtained by exhaustive search, while adapting to changes across decoding rounds.
%significantly outperforming any static speculative length.  
%As a training-free, plug-and-play strategy, LibraSpec integrates seamlessly with existing diffusion-based speculative decoding frameworks. 

We evaluate \textsc{LibraSpec} with six LLMs and three state-of-the-art (SOTA) diffusion-based speculative decoding methods.
Extensive experiments on mathematical reasoning, code generation, and chat benchmarks demonstrate consistent improvements brought in by \textsc{LibraSpec} 
%consistently accelerates Fast-dLLM, DFlash, and DDTree 
under both greedy and sampling settings. 
It further improves the end-to-end speedup by $0.5\sim1.5\times$ over diffusion-based methods and achieves up to $8.49\times$ speedup over vanilla autoregression-based methods. %end-to-end speedup on top of these strong baselines 
The main contributions are summarized as follows.

\begin{itemize}
    \item We identify that the emerging shift toward diffusion-based drafters changes the objective of dynamic speculative-length selection from accepted-length estimation to direct optimization of the acceptance-gain--verification-cost trade-off.    

    \item We formulate the problem as expected-speedup optimization, derive a marginal criterion for beneficial length adjustments, and derive necessary and sufficient conditions identifying the maximum admissible adjustment that guarantees improvement.

    \item We propose \textsc{LibraSpec}, a training-free and plug-and-play dynamic-length algorithm that consistently accelerates multiple diffusion-based speculative decoding methods across models, tasks, and decoding settings.

\end{itemize}

\section{Related work}

\subsection{Diffusion-Based Speculative Decoding}

Since Speculative Diffusion Decoding \citep{sdd-Christopher} first introduced discrete diffusion models as speculative drafters, several follow-up studies have been proposed. Methods \citep{specdiff-pan, specdiff-2-sandler, llmknow-samragh} represented by DiffuSpec \citep{diffuspec-li} employ large-scale pretrained diffusion language models as training-free speculative drafters, leveraging inference-time search strategies or train–test alignment techniques to improve draft quality and acceptance rates. However, these methods depend on large-scale draft models, which incur significant memory overhead and inference latency. In contrast, DFlash \cite{dflash-chen} trains a lightweight block-wise diffusion drafter that exploits KV Injection to extract rich contextual features from the target model, allowing an entire draft block to be generated in a single forward pass. Likewise, methods \citep{dart-liu, taps-wang, jetspec-hu} represented by DDTree \citep{ddtree-ringel}, build draft trees based on the token distributions obtained from a single diffusion forward pass, thereby further improving the accepted draft length. 
Our method can be seamlessly integrated with these diffusion-based speculative drafters. In this work, we demonstrate its effectiveness when combined with representative methods, including FastdLLM, DFlash and DDTree, consistently achieving additional decoding speedups across diverse benchmarks.

\subsection{Dynamic Speculative Decoding}

Since the speculative block length plays a critical role in determining the overall decoding efficiency, a growing body of work has focused on dynamically adjusting the draft length during inference.
Existing methods are mainly divided into two categories. Methods \citep{specdec-huang, ltd-zhang, blockpilot-zhang, spec-verify-kim} represented by DISCO \citep{disco-mamou} train a predictor to forecast the length of draft generation, which achieves relatively accurate prediction of the expected received length. However, they require additional training and rely heavily on the generalization ability of the predictor.
In contrast, methods \citep{del-zarch, adadecode-wei, banditspec-hou} such as Gemma 4 MTP \citep{gemma4-mtp} and FailFast \citep{failfast-pan} leverage heuristic signals, including historical acceptance rates, confidence scores, and entropy, as indirect indicators to guide the expected future acceptance length. While these approaches incur no additional training cost, they fail to account for the computational overhead introduced by the verification stage.
More importantly, these methods inherit the optimization paradigm of autoregressive speculative decoding, where the objective is to estimate the expected accepted prefix length. In contrast, LibraSpec explicitly models the trade-off between the expected benefit of successful speculation and the verification overhead, providing a novel perspective for dynamic speculative decoding.

\section{Methodology}

% NOTE: 介绍符号系统
\subsection{Preliminaries}
\label{sc:preliminaries}

Speculative decoding accelerates autoregressive generation by combining a lightweight draft model $\mathcal{M}_{\mathrm{draft}}$ with a target model $\mathcal{M}_{\mathrm{target}}$ \citep{sd-leviathan}. 
Given a prefix $x_{\le t}$, the draft model generates a speculative continuation $(x_{t+1}, \ldots, x_{t+d})$, producing a probability distribution $q_v = P_{\mathcal{M}_{\mathrm{draft}}}(x_{t+v}\mid x_{<t+v})$ for each drafted position $v\in\{1,\ldots,d\}$. 
The target model then evaluates all drafted positions in parallel via a single forward pass, yielding the target distribution $p_v = P_{\mathcal{M}_{\mathrm{target}}}(x_{t+v}\mid x_{<t+v})$. A subsequent verification step compares $q_v$ and $p_v$ to determine whether each drafted token should be accepted or corrected. 
Consequently, this speculative mechanism effectively bypasses the sequential bottleneck of standard autoregressive decoding without altering target output distribution of $\mathcal{M}_{\mathrm{target}}$.

\subsection{How Speculative Length Affects Diffusion-Based Speculative Speedup}
\label{sc:speedup}

To quantify the speedup of diffusion-based speculative decoding, consider a prefix $x_{\le t}$ and a length-$d$ draft block $\mathcal{D}_{1:d}=(x_{t+1},\ldots,x_{t+d})$. Let $T_d^{\mathrm{draft}}$ and $T_d^{\mathrm{verify}}$ denote the time required to generate and verify this block. Assuming the target model accepts $\tau_d$ consecutive tokens and has a standard per-token latency of $L_d^{\mathrm{target}}$, the overall speculative decoding speedup, $\eta_d$, can present as:

\begin{equation}
    \label{eq:speedup-objective}
    \eta_d = \frac{L_d^{\mathrm{target}}}{L_d^{\mathrm{spec}}} = \frac{\tau_d\,L_d^{\mathrm{target}}}{T_d^{\mathrm{draft}} + T_d^{\mathrm{verify}}}
\end{equation}

where $L_d^{\mathrm{spec}} = \frac{T_d^{\mathrm{draft}}+T_d^{\mathrm{verify}}}{\tau_d}$ is the average per-token latency of speculative decoding. 
    
In practice, as shown in Fig.~\ref{fig:draft-verify-rate}, compared with autoregressive drafters, diffusion-based drafters reduce the drafting overhead of speculative decoding by several folds, rendering the drafting cost negligible relative to the target-model inference cost.

\begin{figure}[htbp]
    \centering
    \includegraphics[width=\linewidth]{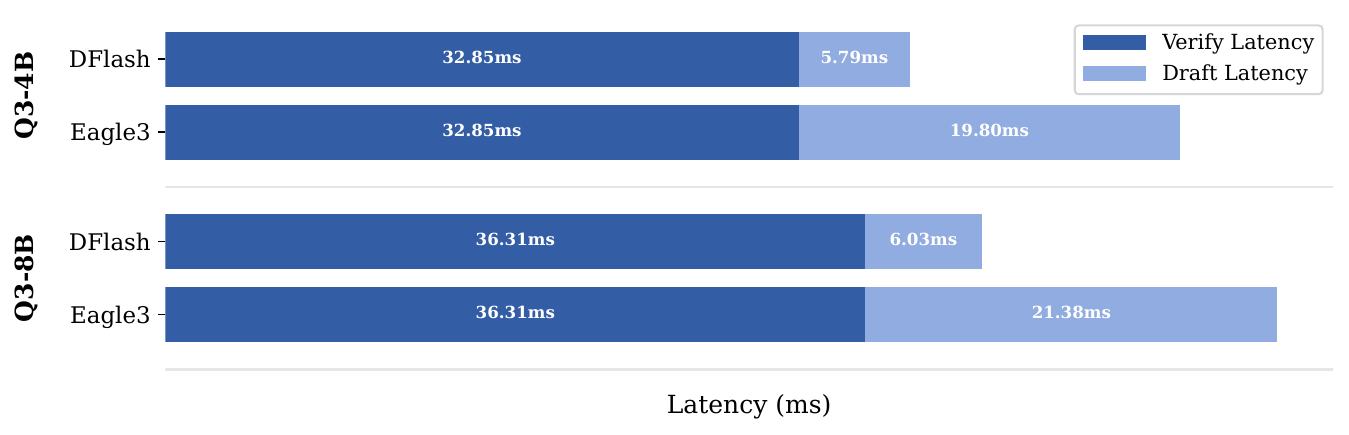}
    \caption{Latency breakdown at a speculative length of 16 tokens, measured on an A100 GPU with 1K context length.}
    \label{fig:draft-verify-rate}
\end{figure}

Therefore, to analyze how varying the speculative length affects overall efficiency, we consider a simplified setting where an infinitely long draft sequence is already available, allowing us to freely adjust the speculative length.

\begin{equation}
    \label{eq:simple-speedup-objective}
    \eta_d \approx \frac{\tau_d L_d^{\mathrm{target}}}{T_d^{\mathrm{verify}}}.
\end{equation}

In this setting, the drafting cost is fixed and independent of the speculative length, leaving the verification stage as the only component affected by the choice of speculative length. Consequently, the key question becomes how the speculative length should be adjusted to boost the decoding speedup.

Directly optimizing this objective is intractable. Therefore, we instead iteratively adjust the speculative length, allowing the optimization process to progressively converge toward the optimal speculative length. We require every speculative-length adjustment to satisfy the conditions of \emph{a single beneficial adjustment}, as formalized in Definition \ref{def:speedup}.

\begin{center}
\colorbox{lightgray!20}{\parbox{0.98\linewidth}{

\begin{definition}[A Single Beneficial Adjustment]
	\label{def:speedup}
    We say that a speculative length adjustment from $d$ to $d'$ is a single beneficial adjustment if it satisfies the following conditions.

    \begin{equation}
        \mathbb E[\eta_{d'}] > \mathbb E[\eta_d]
    \end{equation}
\end{definition}
}}	
% \vspace{-0.3cm}
\end{center}

\subsection{Achieving a Single Beneficial Adjustment of Speculative Length}

Definition~\ref{def:speedup} defines the speedup criterion that a single beneficial adjustment from $d$ to $d'$ must satisfy. As shown in Eq.~\ref{eq:simple-speedup-objective}, decoding speedup is jointly determined by the accepted draft length and the verification latency. Therefore, a single beneficial adjustment fundamentally corresponds to achieving a more favorable trade-off between the expected acceptance gain and the additional verification cost. This trade-off can be characterized by quantifying the marginal gain and the marginal cost introduced by the adjustment, as formalized in Theorem~\ref{thm:gain-cost}.

% By Definition~\ref{def:speedup}, a beneficial adjustment must improve expected speedup. Since Eq.\ref{eq:simple-speedup-objective} depends jointly on accepted speculative length and verification latency, this requires a favorable marginal acceptance-gain–verification-cost trade-off, formalized in Theorem\ref{thm:gain-cost}.

\begin{theorem}
    \label{thm:gain-cost}

    A speculative length adjustment achieves a single beneficial adjustment if and only if the marginal benefit of the adjusted draft segment is more favorable than the current average benefit.
    
    \begin{equation}
        \begin{cases}
            \displaystyle
            \frac{\mathbb{E}[\tau_{d:d'}]}{T^{\mathrm{verify}}_{d:d'}}
            >
            \frac{\mathbb{E}[\tau_d]}{T^{\mathrm{verify}}_d},
            & d<d', \\[1.5em]
            \displaystyle
            \frac{\mathbb{E}[\tau_{d':d}]}{T^{\mathrm{verify}}_{d':d}}
            <
            \frac{\mathbb{E}[\tau_d]}{T^{\mathrm{verify}}_d},
            & d>d'.
        \end{cases}
    \end{equation}

    where the first case ($d<d'$) corresponds to extending the draft sequence, requiring the marginal benefit of the appended draft segment to exceed the current average benefit, while the second case ($d>d'$) corresponds to truncating the draft sequence, requiring the marginal benefit of the removed suffix to be lower than the current average benefit.
\end{theorem}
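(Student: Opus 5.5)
The plan is to reduce the claim to the elementary ``mediant'' property of fractions. We work with the simplified objective of Eq.~\ref{eq:simple-speedup-objective}. We treat $L^{\mathrm{target}}$ as a constant independent of $d$, since it is the per-token latency of the target model. We also treat the verification time $T^{\mathrm{verify}}_d$ as deterministic given $d$. Then
\[
\mathbb{E}[\eta_d]=L^{\mathrm{target}}\,\frac{\mathbb{E}[\tau_d]}{T^{\mathrm{verify}}_d},
\]
and by Definition~\ref{def:speedup} a beneficial adjustment is exactly $\mathbb{E}[\tau_{d'}]/T^{\mathrm{verify}}_{d'} > \mathbb{E}[\tau_d]/T^{\mathrm{verify}}_d$.

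Next we fix the meaning of the segment quantities so that they are additive over a split of the block. For $a<b$, set $\tau_{a:b}=\tau_b-\tau_a$, the number of extra accepted tokens contributed by positions $a+1,\dots,b$. This quantity is nonnegative, because acceptance is a prefix: if any of the first $a$ tokens is rejected, the extension adds nothing. Likewise set $T^{\mathrm{verify}}_{a:b}=T^{\mathrm{verify}}_b-T^{\mathrm{verify}}_a$, and assume it is strictly positive, since verification latency is increasing in the number of verified tokens. By linearity of expectation, $\mathbb{E}[\tau_b]=\mathbb{E}[\tau_a]+\mathbb{E}[\tau_{a:b}]$.

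\emph{Extension case} ($d<d'$). Write $A=\mathbb{E}[\tau_d]$, $B=T^{\mathrm{verify}}_d>0$, $a=\mathbb{E}[\tau_{d:d'}]$ and $b=T^{\mathrm{verify}}_{d:d'}>0$. The condition $(A+a)/(B+b)>A/B$ becomes, after cross-multiplying by the positive quantity $B(B+b)$, $aB>Ab$. This is equivalent to $a/b>A/B$, which is the claimed criterion.

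\emph{Truncation case} ($d>d'$). Now $d$ is the longer length. Write $A=\mathbb{E}[\tau_{d'}]$, $B=T^{\mathrm{verify}}_{d'}$, and let $a,b$ be the quantities of the removed suffix $d':d$. Then $\mathbb{E}[\tau_d]/T^{\mathrm{verify}}_d=(A+a)/(B+b)$, and the beneficial condition reads $A/B>(A+a)/(B+b)$. Cross-multiplying gives $Ab>aB$, i.e.\ $a/b<A/B$. This is stated relative to the \emph{new} average $A/B$, while the theorem compares against the current average $(A+a)/(B+b)$.

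To close the gap we use the mediant lemma. For positive denominators, $(A+a)/(B+b)$ lies between $A/B$ and $a/b$, with strict inequalities unless the two ratios are equal. Hence $a/b<A/B$ holds iff $a/b<(A+a)/(B+b)$. Chaining these equivalences yields the second case. We prove the lemma by the same cross-multiplication used above.

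The only real obstacle is the modelling step that justifies pulling expectations onto the numerator alone. This requires $T^{\mathrm{verify}}_d$ to be non-random given $d$ and $L^{\mathrm{target}}$ to be independent of $d$. It also requires positive marginal verification cost, which rules out degenerate ties where the strict inequalities fail. We will state these assumptions explicitly before the algebra.
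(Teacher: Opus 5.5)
Your proposal is correct and follows essentially the paper's route: substitute the simplified speedup, split $\mathbb{E}[\tau]$ and $T^{\mathrm{verify}}$ additively across the segment, and cross-multiply; your mediant lemma plays the role of the paper's ``ratio comparison identity.'' Your truncation case is also handled more carefully than the paper's, whose intermediate inequality $\frac{\mathbb{E}[\tau_d]-\mathbb{E}[\tau_{d':d}]}{T^{\mathrm{verify}}_d-T^{\mathrm{verify}}_{d':d}}<\frac{\mathbb{E}[\tau_d]}{T^{\mathrm{verify}}_d}$ points the wrong way relative to $\mathbb{E}[\eta_{d'}]>\mathbb{E}[\eta_d]$ (the stated final $<$ follows only from the correct $>$), and your explicit assumptions of deterministic $T^{\mathrm{verify}}_d$ and positive marginal cost are exactly what the paper uses implicitly.
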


\begin{proof}
	See Appendix \ref{proof:gain-cost} for details.
\end{proof}

Theorem~\ref{thm:gain-cost} provides a criterion for determining whether a speculative length adjustment is beneficial. Under prefix acceptance, however, the contribution of later speculative tokens depends on whether all preceding tokens are accepted. As a result, different draft positions contribute differently to the overall gain-cost trade-off. In particular, we strengthen the criterion of a single beneficial adjustment by requiring that the marginal benefit of the speculative suffix starting from position i exceeds the average benefit accumulated up to that position.

By expanding the expressions for the marginal gain, $\mathbb E[\tau_{i:d'}] = \sum_{j=i}^{d'}\prod_{k=i}^{j}p_k$, and the marginal verification cost $T_{i:d'}^{\mathrm{verify}}=c\,(d'-i)$, where $c$ denotes the average verification cost per speculative position, we obtain the corresponding necessary condition for a speculative length adjustment to achieve a single beneficial adjustment, as shown in Theorem \ref{thm:necessary-condition}.

% 必要性
\begin{theorem}
	\label{thm:necessary-condition} 
    Under the strengthened position-wise criterion, A Single Beneficial Adjustment from $d$ to $d'$ \textbf{only if} it satisfies:

    \begin{equation}
        \begin{aligned}
            d' &\le \epsilon_i, \quad \forall\, i \in [1, d], \\
            \epsilon_i &= \max\left\{d' \in \mathbb Z \mid d' < \frac{T_i^{\mathrm{verify}}}{\tau_i\,c} \, \sum_{j=i}^{d'}\prod_{k=i}^{j}p_k + i\right\}.
        \end{aligned}
    \end{equation}
\end{theorem}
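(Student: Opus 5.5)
The argument is a direct unpacking of Theorem~\ref{thm:gain-cost} under the strengthened position-wise criterion, followed by solving for $d'$. Under that criterion, an extension from $d$ to $d'>d$ counts as a single beneficial adjustment only if, for every anchor position $i\in[1,d]$, the suffix $i{:}d'$ has a larger acceptance-gain-to-verification-cost ratio than the prefix up to $i$. Concretely, the extension case of Theorem~\ref{thm:gain-cost} is required to hold with $d$ replaced by $i$:
\begin{equation*}
\frac{\mathbb{E}[\tau_{i:d'}]}{T^{\mathrm{verify}}_{i:d'}} > \frac{\tau_i}{T^{\mathrm{verify}}_i}, \qquad \forall\, i\in[1,d].
\end{equation*}
So the first step is to write down this family of inequalities as the defining (hence necessary) condition. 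Here $\tau_i$ is the accepted-length quantity accumulated up to $i$, written without the expectation, as in the statement.

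Second, substitute the expansions given just before the statement: $\mathbb{E}[\tau_{i:d'}]=\sum_{j=i}^{d'}\prod_{k=i}^{j}p_k$ and $T^{\mathrm{verify}}_{i:d'}=c\,(d'-i)$. Since $d'>i$, $c>0$ and $\tau_i>0$, multiplying through by the positive quantity $c(d'-i)T^{\mathrm{verify}}_i/\tau_i$ preserves the direction of the inequality. This gives
\begin{equation*}
d'-i < \frac{T_i^{\mathrm{verify}}}{\tau_i\,c}\sum_{j=i}^{d'}\prod_{k=i}^{j}p_k ,
\end{equation*}
which is exactly the membership condition in the set defining $\epsilon_i$. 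Hence any admissible $d'$ lies in that set, and therefore $d'\le\epsilon_i$ by the definition of the maximum. Taking this over all $i\in[1,d]$ yields the claim.

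The main obstacle is that $d'$ appears on both sides, because the right-hand side grows with $d'$. The admissible set is therefore not obviously an interval, and $\epsilon_i$ must be shown to be well defined (the set is nonempty and bounded). Boundedness comes from $p_k\le1$: the sum grows by at most $\prod_{k=i}^{d'+1}p_k\le 1$ per step, while the left side grows by exactly $1$. If $T_i^{\mathrm{verify}}/(\tau_i c)$ is at most $1$, or if the $p_k$ are bounded away from $1$ so that the tail is geometric, the left side eventually dominates. In general I would assume the $p_k<1$ for the drafted positions, so the partial sums converge and the set is finite. Nonemptiness is not needed for a necessity statement, since an empty set simply means no extension is admissible, and one can adopt the convention $\epsilon_i=i$. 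Because we only bound $d'$ by the maximum of the set, no monotonicity of the set in $d'$ is required; this is why the result is stated only as a necessary condition.
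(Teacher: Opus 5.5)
Your proposal is correct and is essentially the paper's own proof: impose the suffix-versus-prefix ratio inequality at every anchor $i\in[1,d]$, substitute $\mathbb{E}[\tau_{i:d'}]=\sum_{j=i}^{d'}\prod_{k=i}^{j}p_k$ and $T^{\mathrm{verify}}_{i:d'}=c\,(d'-i)$, rearrange to $d'<\frac{T_i^{\mathrm{verify}}}{\tau_i c}\sum_{j=i}^{d'}\prod_{k=i}^{j}p_k+i$, and read off $d'\le\epsilon_i$ from the definition of the maximum. Your well-definedness aside goes beyond the paper and is not needed: you can read $\epsilon_i$ as a supremum, possibly $+\infty$, and the set is never empty because every integer below $i$ belongs to it. One claim in that aside is wrong, though: individual $p_k<1$ does not make the partial sums converge. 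For example, $p_k=1-2^{-k-1}$ keeps every product above $1/2$, so with $T_i^{\mathrm{verify}}/(\tau_i c)\ge 2$ (the regime of the paper's $\alpha$) the set is unbounded.
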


\begin{proof}
	See Appendix \ref{proof:necessary-condition} for details.
\end{proof}

The preceding theorem establishes the position-wise bounds that every beneficial adjustment must satisfy. Necessity alone, however, does not establish that an adjusted length satisfying these bounds actually improves the decoding speedup.
We therefore prove sufficiency by exploiting the monotonic structure induced by prefix acceptance. As the candidate length increases, each newly added token provides a non-increasing conditional acceptance contribution, and hence the average marginal gain associated with each draft position is also non-increasing. Consequently, the lengths satisfying each position-wise constraint form a contiguous feasible interval. Taking the intersection of these intervals shows that satisfying all position-wise bounds jointly is sufficient for a single beneficial adjustment. A detailed proof is provided in Appendix \ref{proof:sufficient-condition}.

% 充分性
\begin{theorem}
	\label{thm:sufficient-condition}
    A Single Beneficial Adjustment from $d$ to $d'$ \textbf{if} it satisfies:

    \begin{equation}
        \begin{aligned}
            d' &= \min_i \epsilon_i, \quad \forall\, i \in [1, d], \\
            \epsilon_i &= \max\left\{d' \in \mathbb Z \mid d' < \frac{T_i^{\mathrm{verify}}}{\tau_i\,c} \, \sum_{j=i}^{d'}\prod_{k=i}^{j}p_k + i\right\}.
        \end{aligned}
    \end{equation}
\end{theorem}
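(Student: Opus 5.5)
We argue through Theorem~\ref{thm:gain-cost}, in the extension case $d<d'$. It suffices to show that the choice $d'=\min_i \epsilon_i$ gives
\[
\frac{\mathbb{E}[\tau_{d:d'}]}{T^{\mathrm{verify}}_{d:d'}} > \frac{\mathbb{E}[\tau_d]}{T^{\mathrm{verify}}_d}.
\]
The truncation case is symmetric, with the inequality reversed. The plan has three steps. First, we rewrite each position-wise constraint $d'\le\epsilon_i$ as a ratio inequality. Second, we show that each feasible set $\{d' : d'\le \epsilon_i\}$ is a contiguous interval. Third, we specialise the constraint at the right index so that it collapses to the criterion of Theorem~\ref{thm:gain-cost}.

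\emph{Step 1 (unpacking $\epsilon_i$).} We use the expansions $\mathbb E[\tau_{i:d'}]=\sum_{j=i}^{d'}\prod_{k=i}^{j}p_k$ and $T^{\mathrm{verify}}_{i:d'}=c(d'-i)$. Multiplying through by $c$, the defining inequality of $\epsilon_i$ becomes
\[
\frac{\mathbb E[\tau_{i:d'}]}{T^{\mathrm{verify}}_{i:d'}} > \frac{\tau_i}{T^{\mathrm{verify}}_i}.
\]
This says the suffix starting at position $i$ has a better gain-to-cost ratio than the average up to $i$. The inequality is exactly the strengthened criterion used in Theorem~\ref{thm:necessary-condition}, read in the reverse direction.

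\emph{Step 2 (interval structure).} Fix $i$ and set $g_i(d')=\mathbb E[\tau_{i:d'}]/\bigl(c(d'-i)\bigr)$. Raising the length from $d'$ to $d'+1$ adds the term $\prod_{k=i}^{d'+1}p_k$ to the numerator. Because every $p_k\in[0,1]$, these increments are non-increasing in $d'$. Hence $g_i$ is an average of a non-increasing sequence, and such an average is itself non-increasing. This is the standard fact that the running mean of a decreasing sequence decreases: it follows because the newest term is at most the current mean. Consequently $\{d' : g_i(d')>\tau_i/T^{\mathrm{verify}}_i\}$ is a down-set $\{d'\le\epsilon_i\}$. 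The intersection of these down-sets over $i\in[1,d]$ is $\{d'\le \min_i\epsilon_i\}$, so the choice $d'=\min_i\epsilon_i$ satisfies every position-wise inequality at once.

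\emph{Step 3 (reduction to Theorem~\ref{thm:gain-cost}).} Take $i=d$ in the inequality from Step 1, where $\tau_d$ is read as $\mathbb E[\tau_d]$. This gives exactly the extension criterion of Theorem~\ref{thm:gain-cost}, and that theorem then yields $\mathbb E[\eta_{d'}]>\mathbb E[\eta_d]$. The constraints for $i<d$ are not needed for the improvement itself. They only ensure consistency with the necessary condition of Theorem~\ref{thm:necessary-condition}, so that $d'$ is the \emph{maximal} admissible target.

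\emph{Main obstacle.} The delicate point is the strict inequality together with the requirement $d'>d$. If $\min_i\epsilon_i\le d$, no extension is beneficial. In that case I would interpret the statement through the truncation branch: the constraint fails for the current suffix, so the mediant argument applied to the removed segment gives the second case of Theorem~\ref{thm:gain-cost}. A secondary concern is the index convention. The cost $c(d'-i)$ versus $c(d'-i+1)$ must match the number of summands, and the monotonicity argument in Step 2 depends on this alignment. I would fix the convention so that the suffix ratio is a genuine average, namely the number of terms divided by the same count, and then check that the mediant inequality $\frac{a+b}{x+y}$ lying between $\frac ax$ and $\frac by$ connects the pieces.
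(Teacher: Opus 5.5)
Your Steps~1 and~3 are exactly the paper's appendix argument: rearrange the defining inequality of $\epsilon_i$, set $i=d$, and invoke Theorem~\ref{thm:gain-cost}. Your Step~2 supplies the down-set property that the appendix uses without proof when it passes from $d'\le\epsilon_i$ to the strict inequality. That step is sound, and it survives the count mismatch you worried about, since $\frac{S+a}{n+1}\le\frac{S}{n}$ whenever $S$ is a sum of $n+1$ terms each at least $a$.

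The genuine gap is Step~3, and the paper's proof contains the same one. Theorem~\ref{thm:gain-cost} is derived from the additivity $\mathbb{E}[\tau_{d'}]=\mathbb{E}[\tau_d]+\mathbb{E}[\tau_{d:d'}]$, so there $\mathbb{E}[\tau_{d:d'}]$ must be the unconditional increment $\sum_{j=d+1}^{d'}\prod_{k=1}^{j}p_k$. The quantity inside $\epsilon_d$ is instead $\sum_{j=d}^{d'}\prod_{k=d}^{j}p_k$, which differs in two ways. It conditions on positions $1,\dots,d-1$ being accepted, dropping the factor $\prod_{k<d}p_k$. It also counts position $d$, which is already in the current block. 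It is never smaller than the true increment, and strictly larger once $p_d>0$. So the $i=d$ constraint is strictly weaker than the criterion of Theorem~\ref{thm:gain-cost} and cannot imply it. Reading $\tau_d$ as $\mathbb{E}[\tau_d]$ does not fix this.

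This is not a presentational issue: the statement fails in the paper's own model. Take $p_k\equiv0.9$, $c=1$, $T^{\mathrm{verify}}_m=m$ (consistent with $T^{\mathrm{verify}}_{i:j}=c\,(j-i)$), and $d=2$. Then $\epsilon_1=5$ and $\epsilon_2=6$, so $d'=5$, and your $i=2$ inequality holds ($3.0951/3\approx1.03>1.71/2$). Yet $\mathbb{E}[\tau_5]/T^{\mathrm{verify}}_5\approx0.737<0.855=\mathbb{E}[\tau_2]/T^{\mathrm{verify}}_2$, so the adjustment lowers expected speedup. A fixed overhead does not rescue it. With $T^{\mathrm{verify}}_m=2+m$ one gets $\epsilon_1=29$ and $\epsilon_2=20$, hence $d'=20$, and $\mathbb{E}[\tau_{20}]/22\approx0.359<0.4275=\mathbb{E}[\tau_2]/4$.

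What survives is the corrected constraint $\prod_{k\le d}p_k\sum_{j=d+1}^{d'}\prod_{k=d+1}^{j}p_k>c\,(d'-d)\,\mathbb{E}[\tau_d]/T^{\mathrm{verify}}_d$. By Theorem~\ref{thm:gain-cost} this is equivalent to a beneficial extension. Its left side divided by $d'-d$ is a genuine running mean, so your Step~2 argument shows the admissible $d'>d$ form an initial interval above $d$.

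Your truncation branch is also not an argument, for three reasons:
\begin{itemize}
\item For $d'<d$ the $i=d$ sum is empty, so that constraint is vacuous.
\item The binding index $i^\ast$ compares against $\tau_{i^\ast}/T^{\mathrm{verify}}_{i^\ast}$ rather than $\mathbb{E}[\tau_d]/T^{\mathrm{verify}}_d$, and no mediant identity links the two.
\item When $\min_i\epsilon_i=d$, strict improvement fails outright, as you noted.
\end{itemize}
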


Theorem~\ref{thm:sufficient-condition} provides a sufficient condition for achieving a single beneficial adjustment. Any speculative length exceeding this bound inevitably violates the necessary condition for achieving a single beneficial adjustment, while choosing a smaller length leads to conservative updates that unnecessarily sacrifice expected acceptance gains without guaranteeing additional speedup improvements.

\begin{algorithm}[tbp]
	\caption{LibraSpec}
	\label{alg:LibraSpec}
	\begin{algorithmic}[1]
		\Require Block size $n$, Maximum Speculation Length $d_{\text{max}}$
		\Ensure Draft tokens
        \State Generate one draft block of size $n$.
		\State $d \leftarrow n$; $B \leftarrow 0$
		\While{$d < d_{\text{max}}$}
        
    		\State $\epsilon^{*} \leftarrow \min\limits_{i} \epsilon_i$ where $\epsilon_i = \lfloor\alpha \, \sum_{j=i}^{d}\prod_{k=i}^{j}q_k + i\rfloor$, $\forall i = 1,\dots,d$
    		
    		\If{$\epsilon^{*} \le d$}     % 若缩减，直接退出
                \State $d \leftarrow \epsilon^{*}$; break
    		\EndIf
            
    		\State $B \leftarrow B - (\epsilon^{*} - d)$; $d \leftarrow \epsilon^{*}$     % 先从Budget中拿

            \If{$B < 0$}      % Budget不够就再生成
                \State Generate $\left\lceil\frac{-B}{n}\right\rceil$ draft blocks;
$B \leftarrow B + \left\lceil\frac{-B}{n}\right\rceil n$.
            \EndIf
		\EndWhile
		
		\State Submit $d$ tokens for verification.
	\end{algorithmic}
\end{algorithm}

\subsection{Determining the Final Speculative Length via Iterative Adjustments}

Theorem~\ref{thm:sufficient-condition} provides a theoretically grounded criterion for obtaining a single beneficial adjustment. Repeatedly applying this criterion progressively refines the speculative length toward the optimal decoding speedup. An immediate question is whether repeatedly applying beneficial adjustments indeed converges to an optimal speculative length. 

To answer this question, we begin by analyzing the marginal contribution of a single draft token. Under prefix acceptance, the conditional marginal benefit of each additional token is non-increasing, whereas the marginal verification cost remains approximately constant. Therefore, the marginal efficiency of extending the speculative block decreases as the speculative length grows. Once this marginal efficiency no longer exceeds the current average efficiency, subsequent extensions cannot restore an improvement. We therefore establish the unimodality of the expected decoding speedup with respect to the speculative length, as shown in \ref{thm:unimodal}.

\begin{theorem}
    \label{thm:unimodal}
    The expected speedup is a unimodal function of the speculative length.
    \begin{equation}
        \frac{\mathbb{E}[\tau_{d+1}]}{T^{\mathrm{verify}}_{d+1}}
        >
        \frac{\mathbb{E}[\tau_d]}{T^{\mathrm{verify}}_d} 
        \Longleftrightarrow
        \prod_{i = 1}^{d+1}p_i
        >
        c\,\frac{\mathbb{E}[\tau_d]}{T^{\mathrm{verify}}_d} 
    \end{equation}
\end{theorem}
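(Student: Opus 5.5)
The proof uses the paper's own model. Expected acceptance is $\mathbb{E}[\tau_d]=\sum_{j=1}^{d}\prod_{k=1}^{j}p_k$, and verification cost is linear, $T^{\mathrm{verify}}_d=c\,d$. Write $g_j=\prod_{k=1}^{j}p_k$, $S_d=\mathbb{E}[\tau_d]$ and $A_d=S_d/(c\,d)$, which is proportional to $\mathbb{E}[\eta_d]$. The argument has two parts: first the stated equivalence, then unimodality deduced from it.

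\textbf{Step 1 (the equivalence).} This is the one-step case $d'=d+1$ of Theorem~\ref{thm:gain-cost}. The appended segment has marginal gain $g_{d+1}$ and marginal cost $c$. Since $S_{d+1}=S_d+g_{d+1}$ and $T^{\mathrm{verify}}_{d+1}=T^{\mathrm{verify}}_d+c$, the mediant identity gives the following. The ratio $(S_d+g_{d+1})/(T^{\mathrm{verify}}_d+c)$ exceeds $S_d/T^{\mathrm{verify}}_d$ if and only if $g_{d+1}/c>S_d/T^{\mathrm{verify}}_d$. To see this, cross-multiply using positive denominators: $S_dT+g_{d+1}T>S_dT+S_dc$, which is equivalent to $g_{d+1}>c\,S_d/T$. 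This is exactly the displayed condition $\prod_{i=1}^{d+1}p_i>c\,\mathbb{E}[\tau_d]/T^{\mathrm{verify}}_d$.

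\textbf{Step 2 (unimodality).} Two monotonicity facts drive this step.

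\emph{(a) The marginal term is non-increasing.} Since $p_k\in[0,1]$, we have $g_{d+1}=g_d\,p_{d+1}\le g_d$.

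\emph{(b) The sign of the increment switches at most once.} Let $\Delta_d=A_{d+1}-A_d$. By Step 1, $\operatorname{sign}\Delta_d=\operatorname{sign}(g_{d+1}-c\,A_d)$. Suppose $\Delta_d\le 0$ for some $d$; I need to show that $\Delta_{d+1}\le 0$ as well.
\begin{itemize}
\item From $\Delta_d\le 0$, the mediant property gives $g_{d+1}\le c\,A_d$.
\item It also puts $A_{d+1}$ between $A_d$ and $g_{d+1}/c$, so $c\,A_{d+1}\ge g_{d+1}$.
\item Combining with (a), $g_{d+2}\le g_{d+1}\le c\,A_{d+1}$, hence $\Delta_{d+1}\le 0$.
\end{itemize}
By induction, once the expected speedup stops increasing it never increases again. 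Therefore $A_d$ is non-decreasing up to some $d^\ast$ and non-increasing afterwards, which is unimodality.

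\textbf{Main obstacle.} The delicate part is the weak versus strict inequalities in Step 2, because plateaus can occur. The non-strict form above makes the argument go through and naturally yields an optimal \emph{interval} rather than a single point, consistent with the introduction.

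A secondary concern is justifying the model itself:
\begin{itemize}
\item that $\mathbb{E}[\tau_d]$ is the sum of prefix products, which follows from prefix acceptance with independent per-position acceptance probabilities $p_k$;
\item that verification cost is affine in length.
\end{itemize}
If $T^{\mathrm{verify}}_d=c\,d+c_0$ with $c_0>0$, the same mediant argument still works, because only the increment $c$ enters.
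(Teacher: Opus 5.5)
Your proof is correct and follows essentially the same route as the paper. It uses non-increasing prefix products $g_{d+1}\le g_d$, a constant per-position verification increment $c$ (as you note, only the increment enters, so affine $T^{\mathrm{verify}}_d$ is covered), and the one-step case $d'=d+1$ of Theorem~\ref{thm:gain-cost} to get the displayed equivalence. The paper then only asserts that this equivalence ``immediately implies'' unimodality, but the threshold $c\,A_d$ itself moves with $d$, so monotonicity of $g_d$ alone is not enough. Your Step~2(b) supplies the missing single-crossing argument: after the first non-positive increment, the mediant observation $c\,A_{d+1}\ge g_{d+1}\ge g_{d+2}$ rules out any later rise, which makes your version the more complete one.
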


\begin{proof}
    See Appendix \ref{proof:unimodal} for details.
\end{proof}

The unimodal structure implies that every beneficial adjustment moves the speculative length toward the unique optimum. We therefore establish the convergence of the iterative adjustment procedure, as shown in \ref{thm:convergence}.

\begin{theorem}
    \label{thm:convergence}
    By iteratively applying single beneficial adjustments, the speculative length converges in finitely many steps to $d'$, which lies in the globally optimal interval of speculative lengths.

    \begin{equation}
        \mathbb E[\eta_{d'}]
        =
        \max_{d\in\mathbb Z_{>0}}\mathbb E[\eta_{d}].
    \end{equation}
\end{theorem}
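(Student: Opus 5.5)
We argue in three steps: the iterates never repeat, only finitely many lengths can ever be visited, and a point where no beneficial adjustment exists is a global maximiser by Theorem~\ref{thm:unimodal}.

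\textbf{Step 1: the iterates never repeat.} Write $f(d)=\mathbb E[\eta_d]\propto \mathbb E[\tau_d]/T^{\mathrm{verify}}_d$ under the simplified objective~\eqref{eq:simple-speedup-objective}. Let $d_0,d_1,d_2,\dots$ be the lengths produced by successive single beneficial adjustments. By Definition~\ref{def:speedup}, $f(d_{k+1})>f(d_k)$, so the sequence $f(d_k)$ is strictly increasing. Hence no length can be visited twice.

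\textbf{Step 2: only finitely many lengths can be visited.} Two bounds are available.
\begin{itemize}
\item In the algorithmic setting, lengths are capped by $d_{\max}$.
\item More intrinsically, $\mathbb E[\tau_d]\le \sum_{j\ge1}\prod_{k\le j}p_k$, while $T^{\mathrm{verify}}_d=c\,d$ grows linearly. So $f(d)\to 0$ as $d\to\infty$. Since $f(d_0)>0$, every iterate must satisfy $f(d)>f(d_0)$, which confines the iterates to a finite set $\{1,\dots,D\}$.
\end{itemize}
A strictly increasing sequence of values taken on a finite set must terminate. So the procedure stops after finitely many steps at some $d'$ from which no beneficial adjustment exists.

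\textbf{Step 3: the stopping point is globally optimal.} Since no beneficial adjustment exists at $d'$, in particular no move to $d'\pm1$ is beneficial. This gives $f(d'+1)\le f(d')$ and $f(d'-1)\le f(d')$, so $d'$ is a local maximum.

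To upgrade this to a global maximum we use Theorem~\ref{thm:unimodal}, made precise as follows. Define $g(d)=\prod_{i\le d+1}p_i - c\,f(d)$; by that theorem, $f(d+1)>f(d)$ if and only if $g(d)>0$.
\begin{itemize}
\item The product $\prod_{i\le d+1}p_i$ is non-increasing in $d$.
\item While $f$ is increasing, $c\,f(d)$ is increasing.
\item Hence once $g(d)\le0$ we get $f(d+1)\le f(d)$. The mediant property then gives $\frac{\mathbb E[\tau_{d+1}]}{c(d+1)}\ge \prod_{i\le d+2}p_i/c$, so $g(d+1)\le 0$ as well.
\end{itemize}
By induction, $f$ is non-decreasing up to the first index where $g\le0$ and non-increasing afterwards. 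A unimodal sequence on the integers has no local maxima other than its global maximising plateau. Therefore $f(d')=\max_{d\in\mathbb Z_{>0}} f(d)$.

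\textbf{Step 4: handling plateaus.} If $f$ is flat over an interval, the strict inequality in Definition~\ref{def:speedup} forbids moves along the plateau. This is why the conclusion is stated for an optimal \emph{interval} rather than a unique point. One also has to check that a flat step $f(d)=f(d+1)$ before the peak cannot create a spurious stopping point. This follows because flatness at $d$ means $g(d)=0$, and the induction above then forces $f$ to be non-increasing thereafter; so any such $d$ already lies in the maximising set.

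\textbf{Main obstacle.} The delicate step is the induction showing that $g$ changes sign at most once. It relies on $p_i\le 1$ and on $T^{\mathrm{verify}}_d$ being exactly linear in $d$. I would establish it through the identity
\[
f(d+1)=\frac{d\,f(d)+\prod_{i\le d+1}p_i/c}{d+1},
\]
which expresses $f(d+1)$ as a weighted average of $f(d)$ and the new marginal term.
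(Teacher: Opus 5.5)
Your Step 2 is where the argument breaks. The missing ingredient is exactly where the paper uses unimodality.

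\textbf{Step 2 does not establish termination.} The intrinsic bound is unjustified: $\sum_{j\ge1}\prod_{k\le j}p_k$ need not converge. In general $f(d)\to\lim_{j}\prod_{k\le j}p_k/c$, and this limit can be positive. Note also that with $T^{\mathrm{verify}}_d=c\,d$ exactly, $f(d)=\frac{1}{cd}\sum_{j\le d}\prod_{k\le j}p_k$ is a Ces\`aro mean of a non-increasing sequence. It is therefore non-increasing, so $d=1$ is always optimal and your $g$ is never positive. A non-trivial model needs a fixed overhead, $T^{\mathrm{verify}}_d=T_0+cd$. Your mediant induction survives this, since it only uses $T^{\mathrm{verify}}_{d+1}-T^{\mathrm{verify}}_d=c$.

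In that model the set $\{d: f(d)>f(d_0)\}$ can be infinite even though a maximiser exists. Take $p_k=1$ for $k\neq K+1$, $p_{K+1}=\tfrac12$, and $c<T_0<cK$. Then $f$ rises to its peak at $d=K$ and afterwards decreases strictly towards $\tfrac{1}{2c}>f(1)$. So from $d_0=1$ every $d\ge2$ is a beneficial target, and ``no repeats'' gives no bound on the number of steps. The $d_{\max}$ bullet does not rescue this, because it changes the statement, which maximises over $\mathbb{Z}_{>0}$.

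\textbf{How the paper gets finiteness.} The paper fixes a maximiser $d^\star$ and applies Theorem~\ref{thm:unimodal} to the iterates themselves. Since $f$ is non-decreasing up to $d^\star$ and non-increasing after it, strict improvement forces the iterates below $d^\star$ to increase and those above $d^\star$ to decrease. This gives at most $d^\star-1$ of the former and at most $\bar d-d^\star$ of the latter, where $\bar d$ is the first iterate exceeding $d^\star$. The argument needs only that a maximiser exists. That has to be assumed in some form: for $p_k\equiv1$ and $T_0>0$, $f(d)=d/(T_0+cd)$ increases forever and the procedure never stops. The paper assumes it silently when it picks $d^\star$.

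\textbf{Step 3 uses unimodality where it is not needed.} Definition~\ref{def:speedup} puts no restriction on $|d'-d|$. So ``no beneficial adjustment exists at $d'$'' already says $f(d)\le f(d')$ for every $d\in\mathbb{Z}_{>0}$, which is how the paper concludes.

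If you instead read the stopping rule as ``no improving move to $d'\pm1$'', your local-to-global step is incomplete. A merely weakly unimodal sequence such as $5,4,4,3$ has a non-global weak local maximum. Your Step 4 only treats flat steps before the peak, where none can occur because $g>0$ there. What excludes a lower plateau is the strict form of your mediant argument. If $g(d)<0$, then $\prod_{k\le d+1}p_k/c<f(d+1)<f(d)$, hence $g(d+1)\le \prod_{k\le d+1}p_k-c\,f(d+1)<0$. So after the first strict decrease, $f$ decreases strictly forever.

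With that addition, your induction is a cleaner proof of Theorem~\ref{thm:unimodal} than the paper's. It is also precisely the tool you need to repair Step 2.
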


\begin{proof}
    See Appendix \ref{proof:convergence} for details.
\end{proof}

To apply the theoretical result in practice, two practical challenges must be addressed. First, the target-model acceptance probabilities $p_i$ are unavailable before verification. Second, the idealized assumption of an infinitely long draft sequence does not hold in practice. Since draft tokens are generated on demand, the confidence scores of future draft tokens are unavailable before they are generated. 

\begin{assumption}
    \label{ass:specforge}
    The draft-model confidence scores $q_i$ are well calibrated with the corresponding target-model acceptance probabilities $p_i$ \citep{specforge-li, making-pan}.
\end{assumption}

To address the first challenge, under this assumption, the draft-model confidence scores can be used as a proxy for the unknown target-model acceptance probabilities. However, this calibration generally deteriorates as the speculative horizon increases, and the rate of deterioration varies across draft models. We therefore constrain online speculation with a model-dependent maximum speculative length, $d_{\mathrm{max}}$. The influence of $d_{\mathrm{max}}$ is further analyzed in Section~\ref{sc:maxd}. To address the second challenge, as a practical compromise, we compute the adjustment criterion using only the confidence scores of the draft tokens generated so far in the current draft block. Since $T_i^{\mathrm{verify}}$, $\tau_i$, and $c$ are runtime-dependent, we absorb them into a single trade-off hyperparameter $\alpha=T_i^{\mathrm{verify}}\tau_i^{-1}c^{-1}$, which controls the aggressiveness of speculative length adjustment.

Building upon the above theoretical derivations, we propose a plug-and-play dynamic speculative decoding strategy, LibraSpec (Algorithm~\ref{alg:LibraSpec}), which dynamically adjusts the speculative block length through iterative refinements. At each iteration, LibraSpec seeks a single beneficial adjustment, thereby progressively improving the theoretical decoding speedup.

% 解释一下在具体实现过程中为什么要“先生成草稿块，后根据结果消耗草稿token”
To avoid frequent calls to the draft model caused by continual adjustments of the speculative length, LibraSpec maintains a draft budget $B$. At the beginning of each speculative round, an initial draft block is generated, and subsequent length extensions first consume the remaining budget before requesting additional draft blocks. New draft blocks are generated only when the budget is exhausted, reducing unnecessary draft-model calls and the associated I/O overhead.

% To avoid frequent draft model calls caused by iterative adjustments of the speculative length, LibraSpec employs a budget-based draft generation mechanism. At the beginning of each speculative round, it generates a fixed-size draft block and initializes a draft budget $B$ representing the available draft tokens. Subsequent length extensions first consume the remaining budget; additional draft blocks are generated only when the budget is insufficient. By batching draft generation in this manner, LibraSpec reduces draft-model invocations and the associated I/O overhead.

% 说明长度一缩短就停止迭代的原因
LibraSpec initializes the iterative refinement from the default block size of each draft model, using it as the initial speculative length d. Inspired by the design philosophy of damping techniques in iterative optimization, LibraSpec performs at most one rollback once the estimated speculative length falls below the current length. Since repeated updates may introduce oscillatory behavior due to the interaction between speculative length adjustment and probability estimation, LibraSpec terminates the adjustment after a single correction. In practice, this design improves stability while avoiding additional computation that often provides limited benefit.

\begin{table*}[tbp]
%	\small
	\centering
	\caption{Comparison with Dynamic Speculative Length Methods on Qwen2.5-Instruct. FailFast, G4-style, and Ours are all built upon FastdLLM. (FdLLM: FastdLLM; $\eta$: speedup; $\tau$: avg.\ acceptance length.)}
	\label{tab:main_results_qwen25_combined}
	\setlength{\tabcolsep}{3pt}
	\renewcommand{\arraystretch}{1.08}
	\resizebox{\textwidth}{!}{%
		\begin{tabular}{cl*{20}{c}}
			\toprule
			\multirow{3}{*}{\rotatebox{90}{Model}} &
			\multirow{3}{*}{Method} &
			\multicolumn{10}{c}{Temperature = 0} &
			\multicolumn{10}{c}{Temperature = 1} \\
			\cmidrule(lr){3-12}\cmidrule(lr){13-22}
			& &
			\multicolumn{2}{c}{MATH-500} &
			\multicolumn{2}{c}{GSM8K} &
			\multicolumn{2}{c}{HumanEval} &
			\multicolumn{2}{c}{MT-Bench} &
			\multicolumn{2}{c}{Avg} &
			\multicolumn{2}{c}{MATH-500} &
			\multicolumn{2}{c}{GSM8K} &
			\multicolumn{2}{c}{HumanEval} &
			\multicolumn{2}{c}{MT-Bench} &
			\multicolumn{2}{c}{Avg} \\
			\cmidrule(lr){3-4}\cmidrule(lr){5-6}\cmidrule(lr){7-8}\cmidrule(lr){9-10}\cmidrule(lr){11-12}
			\cmidrule(lr){13-14}\cmidrule(lr){15-16}\cmidrule(lr){17-18}\cmidrule(lr){19-20}\cmidrule(lr){21-22}
			& &
			$\eta$ & $\tau$ & $\eta$ & $\tau$ & $\eta$ & $\tau$ & $\eta$ & $\tau$ & $\eta$ & $\tau$ &
			$\eta$ & $\tau$ & $\eta$ & $\tau$ & $\eta$ & $\tau$ & $\eta$ & $\tau$ & $\eta$ & $\tau$ \\
			\midrule

			\multirow{5}{*}{\rotatebox{90}{Q2.5-7B}}
			& EAGLE-3 & 2.23$\times$ & 3.71 & 2.12$\times$ & 3.52 & 2.25$\times$ & 3.21 & 1.79$\times$ & 2.91 & 2.10$\times$ & 3.34 & 2.12$\times$ & 3.58 & 2.03$\times$ & 3.42 & 2.16$\times$ & 3.06 & 1.66$\times$ & 2.77 & 1.99$\times$ & 3.21 \\
			& FdLLM & 1.92$\times$ & 3.29 & 1.53$\times$ & 2.25 & 1.61$\times$ & 2.43 & 1.40$\times$ & 2.03 & 1.62$\times$ & 2.50 & 1.81$\times$ & 3.16 & 1.48$\times$ & 2.16 & 1.47$\times$ & 2.27 & 1.29$\times$ & 1.87 & 1.51$\times$ & 2.37 \\
			& FdLLM+FailFast & 2.98$\times$ & 4.48 & 2.25$\times$ & 3.79 & 2.52$\times$ & 4.83 & 1.93$\times$ & 3.16 & 2.42$\times$ & 4.07 & 2.79$\times$ & 3.87 & 2.07$\times$ & 3.65 & 2.33$\times$ & 4.66 & 1.79$\times$ & 3.02 & 2.25$\times$ & 3.80 \\
			& FdLLM+G4-style & 2.29$\times$ & 3.80 & 1.76$\times$ & 2.64 & 2.02$\times$ & 3.80 & 1.40$\times$ & 2.05 & 1.87$\times$ & 3.07 & 2.10$\times$ & 3.51 & 1.78$\times$ & 2.86 & 1.75$\times$ & 3.48 & 1.26$\times$ & 1.82 & 1.72$\times$ & 2.92 \\
			& FdLLM+Ours & \textbf{3.54$\times$} & \textbf{5.11} & \textbf{3.09$\times$} & \textbf{4.23} & \textbf{3.12$\times$} & \textbf{5.29} & \textbf{2.39$\times$} & \textbf{3.43} & \textbf{3.04$\times$} & \textbf{4.52} & \textbf{3.31$\times$} & \textbf{4.94} & \textbf{2.88$\times$} & \textbf{4.05} & \textbf{3.00$\times$} & \textbf{5.09} & \textbf{2.14$\times$} & \textbf{3.29} & \textbf{2.83$\times$} & \textbf{4.34} \\
			\midrule

			\multirow{5}{*}{\rotatebox{90}{Q2.5-14B}}
			& EAGLE-3 & 2.36$\times$ & 3.80 & 2.19$\times$ & 3.59 & 2.61$\times$ & 3.83 & 1.81$\times$ & 2.99 & 2.24$\times$ & 3.55 & 2.25$\times$ & 3.69 & 2.06$\times$ & 3.48 & 2.48$\times$ & 3.73 & 1.66$\times$ & 2.82 & 2.11$\times$ & 3.43 \\
			& FdLLM & 2.47$\times$ & 4.19 & 2.21$\times$ & 3.68 & 2.23$\times$ & 2.94 & 1.93$\times$ & 2.76 & 2.21$\times$ & 3.39 & 2.40$\times$ & 4.08 & 2.09$\times$ & 3.50 & 2.11$\times$ & 2.83 & 1.82$\times$ & 2.61 & 2.11$\times$ & 3.26 \\
			& FdLLM+FailFast & 3.73$\times$ & 6.38 & 2.87$\times$ & 4.37 & 3.41$\times$ & 5.12 & 2.25$\times$ & 3.37 & 3.07$\times$ & 4.81 & 3.56$\times$ & 6.11 & 2.65$\times$ & 4.22 & 3.23$\times$ & 4.92 & 1.99$\times$ & 3.17 & 2.86$\times$ & 4.61 \\
			& FdLLM+G4-style & 2.59$\times$ & 4.37 & 1.82$\times$ & 3.27 & 2.76$\times$ & 3.89 & 2.04$\times$ & 2.95 & 2.30$\times$ & 3.62 & 2.66$\times$ & 4.24 & 2.18$\times$ & 3.73 & 2.41$\times$ & 3.48 & 1.87$\times$ & 2.70 & 2.28$\times$ & 3.54 \\
			& FdLLM+Ours & \textbf{4.71$\times$} & \textbf{7.31} & \textbf{3.68$\times$} & \textbf{5.23} & \textbf{4.08$\times$} & \textbf{5.97} & \textbf{2.64$\times$} & \textbf{3.72} & \textbf{3.78$\times$} & \textbf{5.56} & \textbf{4.51$\times$} & \textbf{6.96} & \textbf{3.49$\times$} & \textbf{4.98} & \textbf{3.92$\times$} & \textbf{5.84} & \textbf{2.42$\times$} & \textbf{3.61} & \textbf{3.59$\times$} & \textbf{5.35} \\
			\midrule

			\multirow{5}{*}{\rotatebox{90}{Q2.5-32B}}
			& EAGLE-3 & 2.50$\times$ & 3.95 & 2.27$\times$ & 3.69 & 2.68$\times$ & 3.86 & 1.90$\times$ & 3.14 & 2.34$\times$ & 3.66 & 2.39$\times$ & 3.76 & 2.14$\times$ & 3.50 & 2.56$\times$ & 3.72 & 1.74$\times$ & 2.95 & 2.21$\times$ & 3.48 \\
			& FdLLM & 3.57$\times$ & 4.75 & 3.13$\times$ & 4.38 & 3.16$\times$ & 3.97 & 2.18$\times$ & 2.98 & 3.01$\times$ & 4.02 & 3.31$\times$ & 4.59 & 2.84$\times$ & 4.03 & 2.83$\times$ & 3.82 & 1.99$\times$ & 2.84 & 2.74$\times$ & 3.82 \\
			& FdLLM+FailFast & 4.90$\times$ & 6.45 & 3.71$\times$ & 4.59 & 4.06$\times$ & 5.15 & 2.41$\times$ & 3.49 & 3.77$\times$ & 4.87 & 4.67$\times$ & 6.08 & 3.52$\times$ & 4.16 & 3.82$\times$ & 4.97 & 2.20$\times$ & 3.32 & 3.55$\times$ & 4.63 \\
			& FdLLM+G4-style & 3.89$\times$ & 4.98 & 3.20$\times$ & 4.39 & 3.31$\times$ & 4.18 & 2.23$\times$ & 3.03 & 3.16$\times$ & 4.15 & 3.48$\times$ & 4.86 & 3.18$\times$ & 4.10 & 2.88$\times$ & 3.90 & 1.99$\times$ & 2.82 & 2.88$\times$ & 3.92 \\
			& FdLLM+Ours & \textbf{5.45$\times$} & \textbf{7.32} & \textbf{4.11$\times$} & \textbf{4.78} & \textbf{4.70$\times$} & \textbf{5.94} & \textbf{2.93$\times$} & \textbf{4.25} & \textbf{4.30$\times$} & \textbf{5.57} & \textbf{5.11$\times$} & \textbf{6.99} & \textbf{3.87$\times$} & \textbf{4.32} & \textbf{4.41$\times$} & \textbf{5.63} & \textbf{2.58$\times$} & \textbf{3.98} & \textbf{3.99$\times$} & \textbf{5.23} \\
			\bottomrule
		\end{tabular}%
	}
\end{table*}

\section{Experiments}

\subsection{Experimental Setup}

% 我们基于通义千问2.5（7B指令版、14B指令版、32B指令版）与通义千问3（4B、8B、Coder-30B-A3B指令版）预训练模型开展实验。我们在三类基准测试数据集上对所提方法开展评估：数学类（MATH-500、GSM8K）、代码类（HumanEval）以及对话类（MT-Bench）。针对每一项基准测试，我们通过相较于自回归基线模型的端到端加速比与平均接受长度（$\tau$）来评估解码效率。

\textbf{Models and Evaluations.} We conduct experiments on Qwen2.5-\{7, 14, 32\}B-Instruct \citep{qwen2.5-qwen} and Qwen3-\{4B, 8B, Coder-30B-A3B-Instruct\} \citep{qwen3-qwen} pre-trained models. We evaluate our method on benchmarks spanning three categories: Math, including MATH-500 \citep{math500-hunter} and GSM8K \citep{gsm8k-cobbe}; Code, including HumanEval \citep{humaneval-chen}; and Chat, including MT-Bench \citep{mtbench-zheng}. For each benchmark, we assess decoding efficiency using end-to-end speedup ($\eta$) over the autoregressive baseline and average acceptance length ($\tau$).

% 我们采用原生自回归解码作为基线，将其作为加速比（1.00倍）的参照标准。
% 我们选取具有代表性的推测解码方法开展对比实验，包括自回归方法\textsc{EAGLE-3}、基于扩散的方法\textsc{DFlash}与\textsc{DDTree}、扩散草稿模型\textsc{Fast-dLLM-v2-1.5B}，以及将\textsc{Fast-dLLM-v2-1.5B}用作草稿生成器的\textsc{FailFast}。由于\textsc{LibraSpec}是一种可即插即用的加速技术，我们进一步将其与现有推测解码框架结合开展实验评估，以此验证该技术的兼容性以及可带来的增益效果。

\begin{table*}[tbp]
	\centering
	\caption{Performance of LibraSpec Integrated with Diffusion-Based Speculative Decoding Methods (DFlash and DDTree).}
	\label{tab:main_results_qwen3_combined}
	\setlength{\tabcolsep}{3pt}
	\renewcommand{\arraystretch}{1.08}
	\resizebox{\textwidth}{!}{%
		\begin{tabular}{cl*{20}{c}}
			\toprule
			\multirow{3}{*}{Model} &
			\multirow{3}{*}{Method} &
			\multicolumn{10}{c}{Temperature = 0} &
			\multicolumn{10}{c}{Temperature = 1} \\
			\cmidrule(lr){3-12}\cmidrule(lr){13-22}
			& &
			\multicolumn{2}{c}{MATH-500} &
			\multicolumn{2}{c}{GSM8K} &
			\multicolumn{2}{c}{HumanEval} &
			\multicolumn{2}{c}{MT-Bench} &
			\multicolumn{2}{c}{Avg} &
			\multicolumn{2}{c}{MATH-500} &
			\multicolumn{2}{c}{GSM8K} &
			\multicolumn{2}{c}{HumanEval} &
			\multicolumn{2}{c}{MT-Bench} &
			\multicolumn{2}{c}{Avg} \\
			\cmidrule(lr){3-4}\cmidrule(lr){5-6}\cmidrule(lr){7-8}\cmidrule(lr){9-10}\cmidrule(lr){11-12}
			\cmidrule(lr){13-14}\cmidrule(lr){15-16}\cmidrule(lr){17-18}\cmidrule(lr){19-20}\cmidrule(lr){21-22}
			& &
			$\eta$ & $\tau$ & $\eta$ & $\tau$ & $\eta$ & $\tau$ & $\eta$ & $\tau$ & $\eta$ & $\tau$ &
			$\eta$ & $\tau$ & $\eta$ & $\tau$ & $\eta$ & $\tau$ & $\eta$ & $\tau$ & $\eta$ & $\tau$ \\
			\midrule

			\multirow{4}{*}{\rotatebox{90}{Q3-4B}}
			& DFlash & 5.54$\times$ & 7.73 & 5.10$\times$ & 6.50 & 4.81$\times$ & 6.52 & 2.64$\times$ & 4.33 & 4.52$\times$ & 6.27 & 4.50$\times$ & 6.61 & 4.31$\times$ & 5.97 & 4.36$\times$ & 5.99 & 2.54$\times$ & 4.08 & 3.93$\times$ & 5.66 \\
			& DFlash + Ours & \textbf{5.98$\times$} & \textbf{8.04} & \textbf{5.54$\times$} & \textbf{7.74} & \textbf{5.58$\times$} & \textbf{7.83} & \textbf{2.94$\times$} & \textbf{4.93} & \textbf{5.01$\times$} & \textbf{7.14} & \textbf{4.95$\times$} & \textbf{7.03} & \textbf{4.72$\times$} & \textbf{6.61} & \textbf{5.11$\times$} & \textbf{7.12} & \textbf{2.82$\times$} & \textbf{4.58} & \textbf{4.40$\times$} & \textbf{6.34} \\
			& DDTree & 7.26$\times$ & 10.21 & 6.37$\times$ & 9.31 & 6.81$\times$ & 9.22 & 4.16$\times$ & 6.59 & 6.15$\times$ & 8.83 & 6.52$\times$ & 9.56 & 5.88$\times$ & 8.84 & 5.94$\times$ & 9.15 & 3.79$\times$ & 6.24 & 5.53$\times$ & 8.45 \\
			& DDTree + Ours & \textbf{8.03$\times$} & \textbf{10.97} & \textbf{7.18$\times$} & \textbf{9.98} & \textbf{7.36$\times$} & \textbf{10.27} & \textbf{4.77$\times$} & \textbf{6.87} & \textbf{6.84$\times$} & \textbf{9.52} & \textbf{6.96$\times$} & \textbf{9.87} & \textbf{6.49$\times$} & \textbf{9.50} & \textbf{6.53$\times$} & \textbf{9.92} & \textbf{4.27$\times$} & \textbf{6.91} & \textbf{6.06$\times$} & \textbf{9.05} \\
			\midrule

			\multirow{4}{*}{\rotatebox{90}{Q3-8B}}
			& DFlash & 5.56$\times$ & 7.80 & 4.84$\times$ & 6.55 & 4.73$\times$ & 6.58 & 2.58$\times$ & 4.24 & 4.43$\times$ & 6.29 & 4.52$\times$ & 6.46 & 3.93$\times$ & 5.91 & 4.17$\times$ & 5.49 & 2.28$\times$ & 3.78 & 3.73$\times$ & 5.41 \\
			& DFlash+Ours & \textbf{6.17$\times$} & \textbf{8.27} & \textbf{5.66$\times$} & \textbf{7.91} & \textbf{5.72$\times$} & \textbf{7.99} & \textbf{2.91$\times$} & \textbf{4.61} & \textbf{5.12$\times$} & \textbf{7.20} & \textbf{5.14$\times$} & \textbf{7.38} & \textbf{4.47$\times$} & \textbf{6.74} & \textbf{5.08$\times$} & \textbf{6.44} & \textbf{2.65$\times$} & \textbf{4.29} & \textbf{4.34$\times$} & \textbf{6.21} \\
			& DDTree & 7.52$\times$ & 10.53 & 6.45$\times$ & 9.45 & 6.80$\times$ & 9.68 & 4.15$\times$ & 6.59 & 6.23$\times$ & 9.06 & 6.50$\times$ & 9.52 & 5.96$\times$ & 8.90 & 5.96$\times$ & 8.53 & 3.51$\times$ & 5.97 & 5.48$\times$ & 8.23 \\
			& DDTree+Ours & \textbf{8.03$\times$} & \textbf{11.08} & \textbf{7.24$\times$} & \textbf{10.08} & \textbf{7.28$\times$} & \textbf{10.15} & \textbf{4.69$\times$} & \textbf{6.78} & \textbf{6.81$\times$} & \textbf{9.52} & \textbf{7.07$\times$} & \textbf{10.10} & \textbf{6.51$\times$} & \textbf{9.43} & \textbf{6.50$\times$} & \textbf{9.18} & \textbf{4.09$\times$} & \textbf{6.51} & \textbf{6.04$\times$} & \textbf{8.81} \\
			\midrule

			\multirow{4}{*}{\rotatebox{90}{\shortstack{Q3-Coder\\30B-A3B}}}
			& DFlash & 4.29$\times$ & 5.56 & 4.01$\times$ & 5.18 & 6.09$\times$ & 7.99 & 2.04$\times$ & 3.52 & 4.11$\times$ & 5.56 & 4.02$\times$ & 5.31 & 3.89$\times$ & 5.06 & 5.69$\times$ & 7.58 & 1.88$\times$ & 3.39 & 3.87$\times$ & 5.34 \\
			& DFlash+Ours & \textbf{4.68$\times$} & \textbf{6.22} & \textbf{4.45$\times$} & \textbf{5.77} & \textbf{6.51$\times$} & \textbf{8.56} & \textbf{2.40$\times$} & \textbf{4.27} & \textbf{4.51$\times$} & \textbf{6.21} & \textbf{4.51$\times$} & \textbf{6.03} & \textbf{4.26$\times$} & \textbf{5.50} & \textbf{6.17$\times$} & \textbf{8.24} & \textbf{2.10$\times$} & \textbf{3.93} & \textbf{4.26$\times$} & \textbf{5.93} \\
			& DDTree & 6.10$\times$ & 8.04 & 5.83$\times$ & 7.48 & 8.12$\times$ & 10.33 & 3.07$\times$ & 5.32 & 5.78$\times$ & 7.79 & 5.82$\times$ & 7.68 & 5.63$\times$ & 7.29 & 7.71$\times$ & 10.01 & 2.96$\times$ & 5.12 & 5.53$\times$ & 7.53 \\
			& DDTree+Ours & \textbf{6.58$\times$} & \textbf{8.58} & \textbf{6.25$\times$} & \textbf{8.09} & \textbf{8.49$\times$} & \textbf{10.67} & \textbf{3.38$\times$} & \textbf{5.81} & \textbf{6.18$\times$} & \textbf{8.29} & \textbf{6.24$\times$} & \textbf{8.17} & \textbf{6.15$\times$} & \textbf{7.83} & \textbf{8.21$\times$} & \textbf{10.47} & \textbf{3.33$\times$} & \textbf{5.76} & \textbf{5.98$\times$} & \textbf{8.06} \\
			\bottomrule
		\end{tabular}%
	}
\end{table*}

\textbf{Baselines.} We use vanilla autoregressive decoding as the baseline, which serves as the benchmark for speedup ratios (1.00$\times$). We compare against representative speculative decoding approaches, including the autoregressive method EAGLE-3 \citep{eagle3-li} and diffusion-based approaches Fast-dLLM-v2-1.5B \citep{fastdllmv2-wu} (hereinafter referred to as FastdLLM), FailFast \citep{failfast-pan}, DFlash \citep{dflash-chen}, and DDTree \citep{ddtree-ringel}. We additionally adapt the dynamic speculative length strategy used in Gemma 4 MTP \cite{gemma4-mtp} to the FastdLLM for comparison with representative dynamic speculative length methods (hereinafter referred to as G4-style). Since our method is a plug-and-play acceleration technique, we further evaluate its integration with existing speculative decoding methods to assess its compatibility and complementary benefits.

% 除非另有说明，$\alpha$ 选取 $\{1.8, 1.9, 2.0, 2.1, 2.2\}$ 中加速效果最优的。所有实验均基于英伟达A100图形处理器开展。

\textbf{Implementation Details.} Unless otherwise stated, all experiments are conducted on NVIDIA A100 GPUs. Details of the configurations for each benchmark and draft models are provided in Appendix~\ref{sc:config-details}.

\subsection{Comparison with Dynamic Speculative Length Methods}

In this section, we evaluate LibraSpec against existing dynamic speculative length methods. We adopt FastdLLM, a widely used diffusion-based drafter, and follow its officially released Qwen2.5 configuration as our experimental setup. The dynamic speculative length baselines include FailFast, a dynamic speculation strategy specifically designed for FastdLLM, and G4-style, the speculative length strategy proposed in Gemma 4, which is adapted to FastdLLM for a fair comparison. We further include EAGLE-3 as a representative autoregressive speculative decoding baseline.

As shown in Table~\ref{tab:main_results_qwen25_combined}, LibraSpec consistently outperforms all existing dynamic speculative-length strategies across all Qwen2.5 models, benchmark datasets, and decoding settings. Under greedy decoding ($\mathrm{temperature}=0$), integrating LibraSpec with FastdLLM improves the average end-to-end speedup by 1.42$\times$, 1.57$\times$, and 1.29$\times$ on Qwen2.5-7B, 14B, and 32B, respectively, while consistently surpassing FailFast and G4-style. The performance gains remain robust under non-greedy sampling ($\mathrm{temperature}=1$), where LibraSpec further improves the average speedup of FastdLLM by 1.32$\times$, 1.48$\times$, and 1.25$\times$, respectively. These results demonstrate that, compared with existing methods, LibraSpec enables substantially more effective speculative length adjustment, and that its performance gains consistently generalize across both greedy decoding and stochastic sampling.

\begin{table*}[htbp]
	\centering
	\caption{Performance Comparison of Reasoning Models on Different Benchmarks.}
	\label{tab:main_results_thinking}
	\setlength{\tabcolsep}{3pt}
	\renewcommand{\arraystretch}{1.08}
	\resizebox{\textwidth}{!}{%
		\begin{tabular}{cl*{20}{c}}
			\toprule
			\multirow{3}{*}{Model} &
			\multirow{3}{*}{Method} &
			\multicolumn{10}{c}{Temperature = 0} &
			\multicolumn{10}{c}{Temperature = 1} \\
			\cmidrule(lr){3-12}\cmidrule(lr){13-22}
			& &
			\multicolumn{2}{c}{MATH-500} &
			\multicolumn{2}{c}{GSM8K} &
			\multicolumn{2}{c}{HumanEval} &
			\multicolumn{2}{c}{MT-Bench} &
			\multicolumn{2}{c}{Avg} &
			\multicolumn{2}{c}{MATH-500} &
			\multicolumn{2}{c}{GSM8K} &
			\multicolumn{2}{c}{HumanEval} &
			\multicolumn{2}{c}{MT-Bench} &
			\multicolumn{2}{c}{Avg} \\
			\cmidrule(lr){3-4}\cmidrule(lr){5-6}\cmidrule(lr){7-8}\cmidrule(lr){9-10}\cmidrule(lr){11-12}
			\cmidrule(lr){13-14}\cmidrule(lr){15-16}\cmidrule(lr){17-18}\cmidrule(lr){19-20}\cmidrule(lr){21-22}
			& &
			$\eta$ & $\tau$ & $\eta$ & $\tau$ & $\eta$ & $\tau$ & $\eta$ & $\tau$ & $\eta$ & $\tau$ &
			$\eta$ & $\tau$ & $\eta$ & $\tau$ & $\eta$ & $\tau$ & $\eta$ & $\tau$ & $\eta$ & $\tau$ \\
			\midrule
			
			\multirow{2}{*}{\rotatebox{90}{Q3-4B}}
			& DFlash & 3.92$\times$ & 5.72 & 3.63$\times$ & 5.01 & 3.38$\times$ & 4.71 & 2.18$\times$ & 3.11 & 3.28$\times$ & 4.64 & 3.39$\times$ & 4.84 & 3.34$\times$ & 4.68 & 3.09$\times$ & 4.28 & 2.09$\times$ & 3.01 & 2.98$\times$ & 4.20 \\
			& DFlash+Ours & \textbf{4.31$\times$} & \textbf{6.09} & \textbf{3.96$\times$} & \textbf{5.57} & \textbf{3.66$\times$} & \textbf{5.03} & \textbf{2.39$\times$} & \textbf{3.49} & \textbf{3.58$\times$} & \textbf{5.05} & \textbf{3.71$\times$} & \textbf{5.29} & \textbf{3.73$\times$} & \textbf{5.25} & \textbf{3.34$\times$} & \textbf{4.74} & \textbf{2.31$\times$} & \textbf{3.38} & \textbf{3.27$\times$} & \textbf{4.67} \\
			\midrule
			
			\multirow{2}{*}{\rotatebox{90}{Q3-8B}}
			& DFlash & 4.01$\times$ & 5.82 & 3.62$\times$ & 5.15 & 3.32$\times$ & 4.70 & 2.31$\times$ & 3.37 & 3.32$\times$ & 4.76 & 3.53$\times$ & 5.06 & 3.31$\times$ & 4.69 & 3.03$\times$ & 4.29 & 2.15$\times$ & 3.13 & 3.01$\times$ & 4.29 \\
			& DFlash+Ours & \textbf{4.39$\times$} & \textbf{6.18} & \textbf{3.91$\times$} & \textbf{5.60} & \textbf{3.61$\times$} & \textbf{5.07} & \textbf{2.51$\times$} & \textbf{3.62} & \textbf{3.61$\times$} & \textbf{5.12} & \textbf{3.87$\times$} & \textbf{5.42} & \textbf{3.60$\times$} & \textbf{5.09} & \textbf{3.30$\times$} & \textbf{4.67} & \textbf{2.33$\times$} & \textbf{3.42} & \textbf{3.28$\times$} & \textbf{4.65} \\
			\bottomrule
		\end{tabular}%
	}
\end{table*}

\begin{table*}[htbp]
	\centering
	\caption{Sensitivity of LibraSpec to the Hyperparameter $\alpha$,
		Compared with Static Speculative Length Baselines.}
	\label{tab:ablation-alpha}
	\small
	\setlength{\tabcolsep}{3pt}
	\resizebox{\textwidth}{!}{%
		\begin{tabular}{cccccccccccc@{\hspace{8pt}}cccccccccccc}
			\toprule
			\multicolumn{12}{c}{Target: Q2.5-7B \quad Draft: FastdLLM} &
			\multicolumn{12}{c}{Target: Q3-8B \quad Draft: DFlash} \\
			\cmidrule(lr){1-12}\cmidrule(lr){13-24}
			
			\multirow{2}{*}{w/ Ours} &
			\multirow{2}{*}{$\alpha$} &
			\multicolumn{2}{c}{MATH-500} &
			\multicolumn{2}{c}{GSM8K} &
			\multicolumn{2}{c}{HumanEval} &
			\multicolumn{2}{c}{MT-Bench} &
			\multicolumn{2}{c}{Avg} &
			\multirow{2}{*}{w/ Ours} &
			\multirow{2}{*}{$\alpha$} &
			\multicolumn{2}{c}{MATH-500} &
			\multicolumn{2}{c}{GSM8K} &
			\multicolumn{2}{c}{HumanEval} &
			\multicolumn{2}{c}{MT-Bench} &
			\multicolumn{2}{c}{Avg} \\
			
			& & $\eta$ & $\tau$ & $\eta$ & $\tau$ &
			$\eta$ & $\tau$ & $\eta$ & $\tau$ & $\eta$ & $\tau$ &
			& & $\eta$ & $\tau$ & $\eta$ & $\tau$ &
			$\eta$ & $\tau$ & $\eta$ & $\tau$ & $\eta$ & $\tau$ \\
			\midrule
			
			$\times$ & / &
			1.92$\times$ & 3.29 &
			1.53$\times$ & 2.25 &
			1.61$\times$ & 2.43 &
			1.40$\times$ & 2.03 &
			1.62$\times$ & 2.50 &
			$\times$ & / &
			5.56$\times$ & 7.80 &
			4.84$\times$ & 6.55 &
			4.73$\times$ & 6.58 &
			2.58$\times$ & 4.24 &
			4.43$\times$ & 6.29 \\
			
			\cmidrule(lr){1-12}\cmidrule(lr){13-24}
			
			\multirow{5}{*}{\checkmark} & 1.8 &
			3.48$\times$ & 5.01 &
			2.94$\times$ & 4.05 &
			3.01$\times$ & 5.18 &
			2.36$\times$ & 3.40 &
			2.95$\times$ & 4.41 &
			\multirow{5}{*}{\checkmark} & 1.8 &
			5.88$\times$ & 7.98 &
			5.36$\times$ & 7.48 &
			5.47$\times$ & 7.56 &
			2.72$\times$ & 4.33 &
			4.86$\times$ & 6.84 \\
			
			& 1.9 &
			3.51$\times$ & 5.05 &
			3.03$\times$ & 4.15 &
			3.11$\times$ & 5.27 &
			\textbf{2.39$\times$} & \textbf{3.43} &
			3.01$\times$ & 4.48 &
			& 1.9 &
			5.99$\times$ & 8.09 &
			5.42$\times$ & 7.63 &
			5.58$\times$ & 7.76 &
			2.80$\times$ & 4.47 &
			4.95$\times$ & 6.99 \\
			
			& 2.0 &
			\textbf{3.54$\times$} & \textbf{5.11} &
			\textbf{3.09$\times$} & \textbf{4.23} &
			\textbf{3.12$\times$} & \textbf{5.29} &
			\textbf{2.39$\times$} & \textbf{3.43} &
			\textbf{3.04$\times$} & \textbf{4.52} &
			& 2.0 &
			6.05$\times$ & 8.16 &
			5.50$\times$ & 7.69 &
			5.63$\times$ & 7.83 &
			\textbf{2.91$\times$} & \textbf{4.61} &
			5.02$\times$ & 7.07 \\
			
			& 2.1 &
			3.49$\times$ & 5.01 &
			3.02$\times$ & 4.14 &
			3.05$\times$ & 5.21 &
			2.29$\times$ & 3.38 &
			2.96$\times$ & 4.44 &
			& 2.1 &
			6.15$\times$ & 8.24 &
			5.58$\times$ & 7.84 &
			5.71$\times$ & 7.96 &
			\textbf{2.91$\times$} & \textbf{4.61} &
			5.09$\times$ & 7.16 \\
			
			& 2.2 &
			3.43$\times$ & 4.89 &
			2.92$\times$ & 4.03 &
			2.96$\times$ & 5.13 &
			2.17$\times$ & 3.32 &
			2.87$\times$ & 4.34 &
			& 2.2 &
			\textbf{6.17$\times$} & \textbf{8.27} &
			\textbf{5.66$\times$} & \textbf{7.91} &
			\textbf{5.72$\times$} & \textbf{7.99} &
			2.89$\times$ & 4.57 &
			\textbf{5.11$\times$} & \textbf{7.19} \\
			
			\bottomrule
		\end{tabular}%
	}
\end{table*}

\subsection{Integration with State-of-the-Art Diffusion-Based Drafters}

We further integrate LibraSpec into state-of-the-art diffusion-based speculative decoding methods, including DFlash and DDTree, and evaluate them on the Qwen3 models with thinking mode disabled. Following their official configurations, we evaluate the decoding speedup based on their publicly released Qwen3 draft models.

As shown in Table~\ref{tab:main_results_qwen3_combined}, LibraSpec consistently improves the performance of both DFlash and DDTree across all Qwen3 models, benchmarks, and decoding settings. Compared with the original DFlash, LibraSpec provides an additional average decoding speedup of $0.39\sim0.69\times$ under greedy decoding and $0.39\sim0.67\times$ under non-greedy sampling. Similarly, LibraSpec brings an additional average decoding speedup of $0.39\sim0.69\times$ and $0.45\sim0.56\times$ over DDTree under greedy decoding and non-greedy sampling, respectively. These results demonstrate that LibraSpec is complementary to existing diffusion-based speculative decoding methods and consistently enhances their decoding efficiency across different model architectures and sampling strategies.

\subsection{Robustness under Thinking-Mode Decoding}

% In this section, we evaluate LibraSpec on the Qwen3 models with thinking mode enabled. As shown in Table~\ref{tab:main_results_thinking}, across both greedy decoding ($\text{temperature}=0$) and non-greedy sampling ($\text{temperature}=1$), LibraSpec consistently achieves an additional $8\% \sim 11\%$ end-to-end speedup over the original DFlash.

In this section, we evaluate LibraSpec on the Qwen3 models with thinking mode enabled. As shown in Table~\ref{tab:main_results_thinking}, LibraSpec consistently improves the performance of DFlash across all reasoning models, benchmark datasets, and decoding settings. Compared with the original DFlash, LibraSpec provides an additional average decoding speedup of approximately $0.29\sim0.30\times$ under greedy decoding ($\text{temperature}=0$) and $0.27\sim0.29\times$ under non-greedy sampling ($\text{temperature}=1$), corresponding to a relative improvement of approximately $9\%\sim11\%$ over the original methods.

Notably, the improvements remain evident even under reasoning-intensive scenarios, where the generated outputs typically exhibit higher entropy and greater uncertainty, making speculative decoding more challenging. Nevertheless, LibraSpec consistently improves decoding speed across all evaluated settings, demonstrating that the proposed dynamic speculative-length adjustment generalizes well to reasoning workloads.

\subsection{Hyperparameter ($\alpha$) Sensitivity Analysis}

We study the sensitivity of LibraSpec to the trade-off hyperparameter $\alpha$ under greedy decoding ($\mathrm{temperature}=0$), which balances the expected acceptance gain against the verification overhead. Specifically, a larger $\alpha$ encourages more aggressive draft expansion by placing greater weight on the acceptance gain, while a smaller $\alpha$ favors a more conservative strategy by assigning a higher penalty to failed expansions.

As shown in Table~\ref{tab:ablation-alpha}, LibraSpec is robust across a broad range of $\alpha$ values, requiring little hyperparameter tuning in practice. Moreover, compared with the corresponding baselines without LibraSpec, all evaluated $\alpha$ values consistently improve both decoding speedup and accepted speculative length, demonstrating that LibraSpec remains effective across a wide range of hyperparameter settings. For Qwen2.5-7B with FastdLLM, the best average speedup is achieved at $\alpha=2.0$, while larger values slightly reduce both speedup and $\tau$. 

In contrast, for Qwen3-8B with DFlash, performance consistently improves as $\alpha$ increases, with the best results obtained at $\alpha=2.2$. This suggests that stronger drafters can benefit from more aggressive speculative expansion because the cost of invalidated draft tokens is lower. Based on these observations, we use $\alpha=2.0$ for FastdLLM, and $\alpha=2.2$ for both DFlash and DDTree in all experiments.

\begin{figure*}[htbp]
	\centering
	
	\begin{subfigure}{\linewidth}
		\centering
		\includegraphics[width=0.99\linewidth]{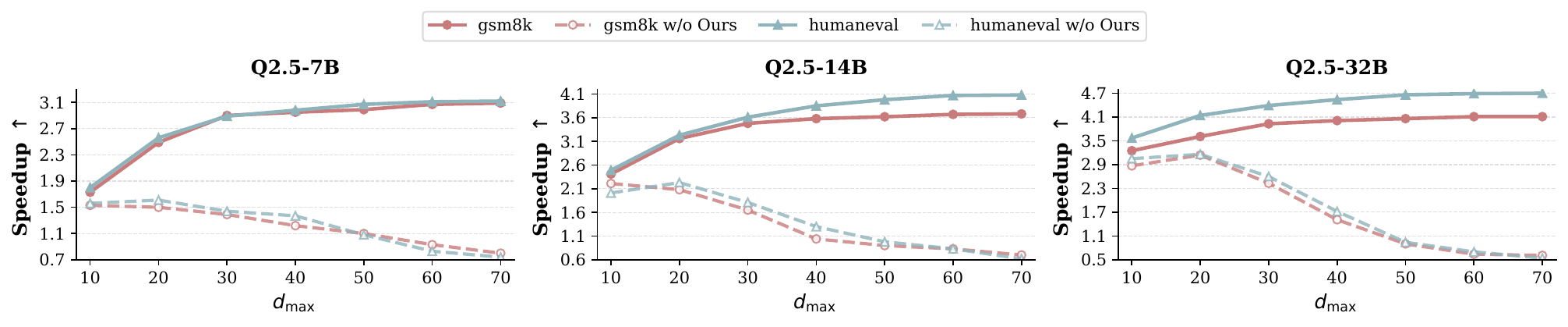}
		\caption{FastdLLM}
		\label{fig:sub1_d_max}
	\end{subfigure}
	
	%	\vspace{0.3em}
	
	\begin{subfigure}{\linewidth}
		\centering
		\includegraphics[width=0.99\linewidth]{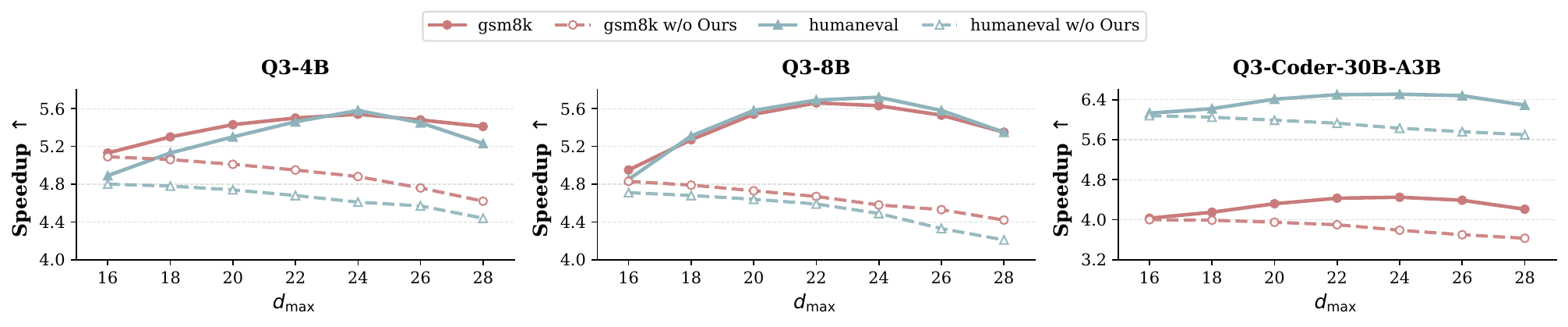}
		\caption{DFlash}
		\label{fig:sub2_d_max}
	\end{subfigure}
	
	\caption{Effect of Maximum Speculative Length on the Speedup of LibraSpec, Compared with Static Speculative Length.}
	\label{fig:d_max_speedup}
\end{figure*}

\subsection{Maximum speculative Length ($d_{\text{max}}$) Sensitivity Analysis}
\label{sc:maxd}

In this section, we investigate how the maximum speculative length $d_{\mathrm{max}}$ affects decoding efficiency for different draft models. As shown in Figure~\ref{fig:d_max_speedup}, regardless of the choice of $d_{\mathrm{max}}$, LibraSpec consistently outperforms the original static length decoding strategy (denoted as w/o Ours in the figure) across all evaluated draft models and datasets. 
The performance gap becomes even more pronounced as the maximum speculative length increases, highlighting the advantage of dynamic speculative length adjustment over static length decoding under more aggressive speculation.
This demonstrates that LibraSpec is robust to the choice of $d_{\mathrm{max}}$, consistently delivering speedup improvements under different maximum speculative length configurations.

Moreover, as draft confidence calibration deteriorates over longer speculative horizons, different draft models exhibit different effective ranges of $d_{\mathrm{max}}$. For FastdLLM, speedup increases before plateauing at approximately $d_{\mathrm{max}}=50\sim70$, suggesting reliable calibration over relatively long horizons. Based on this phenomenon, we therefore set $d_{\mathrm{max}}=60$ for all FastdLLM experiments.

In contrast, DFlash has a shorter effective range because its reliance on target model hidden states inherently limits the parallel decoding block size. Beyond approximately $22\sim24$ tokens, the calibration of the draft-model distribution $q$ against the target-model distribution $p$ gradually deteriorates, leading to larger prediction errors and lower acceptance rates; further extending the speculative length therefore yields little additional benefit, while Assumption~\ref{ass:specforge} becomes less valid. We therefore set $d_{\mathrm{max}}=24$ for all DFlash and DDTree experiments.

\begin{figure}[tbp]
    \centering

    % 第一行
    \begin{subfigure}{0.49\linewidth}
        \centering
        \includegraphics[width=\linewidth]{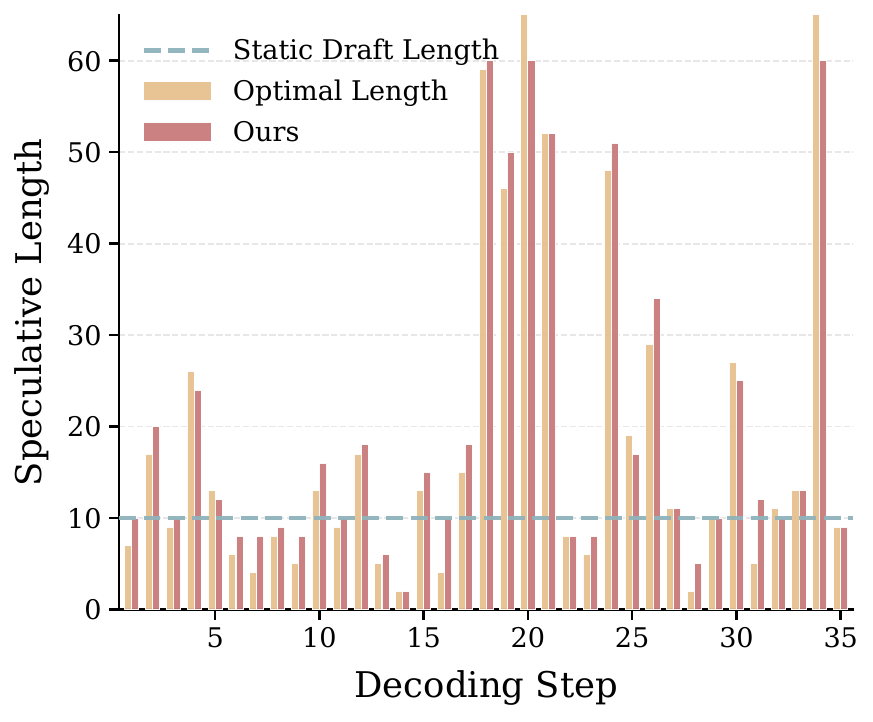}
        \caption{MATH-500}
        \label{fig:sub1}
    \end{subfigure}
    \hfill
    \begin{subfigure}{0.49\linewidth}
        \centering
        \includegraphics[width=\linewidth]{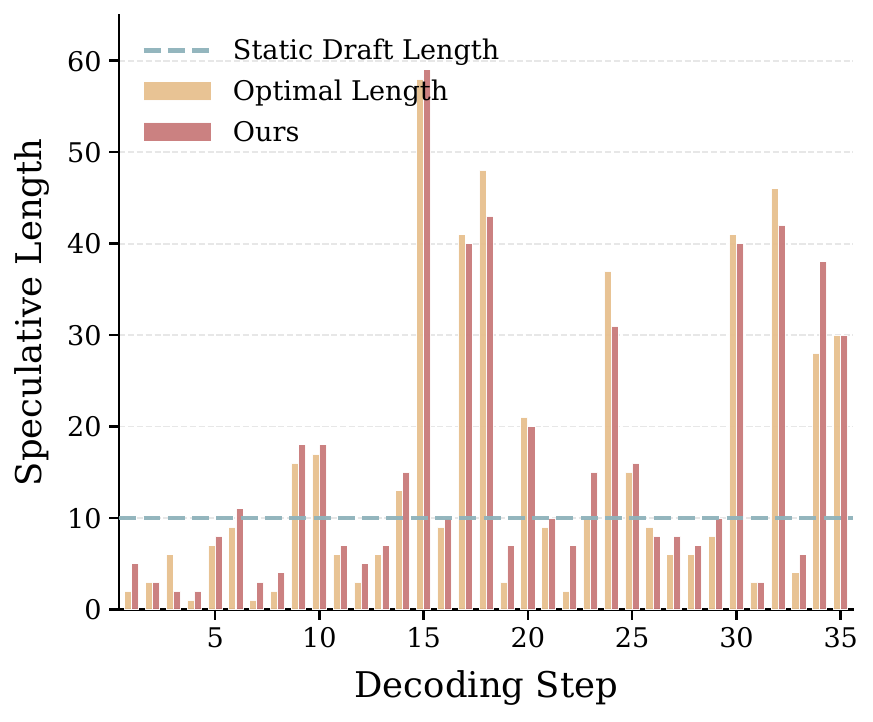}
        \caption{GSM8K}
        \label{fig:sub2}
    \end{subfigure}

    % \vspace{0.5em}

    % 第二行
    \begin{subfigure}{0.49\linewidth}
        \centering
        \includegraphics[width=\linewidth]{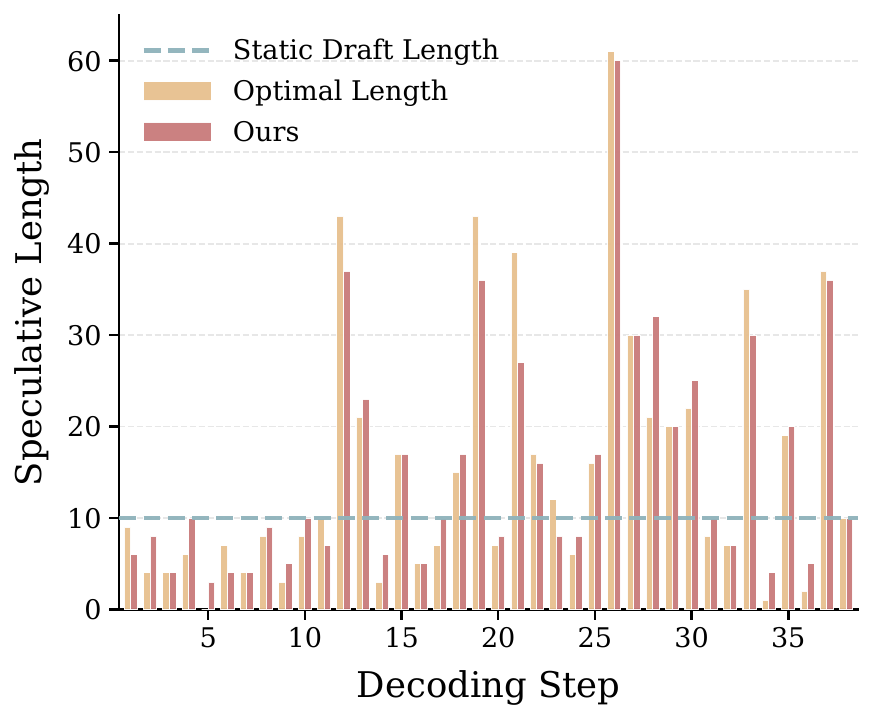}
        \caption{HumanEval}
        \label{fig:sub3}
    \end{subfigure}
    \hfill
    \begin{subfigure}{0.49\linewidth}
        \centering
        \includegraphics[width=\linewidth]{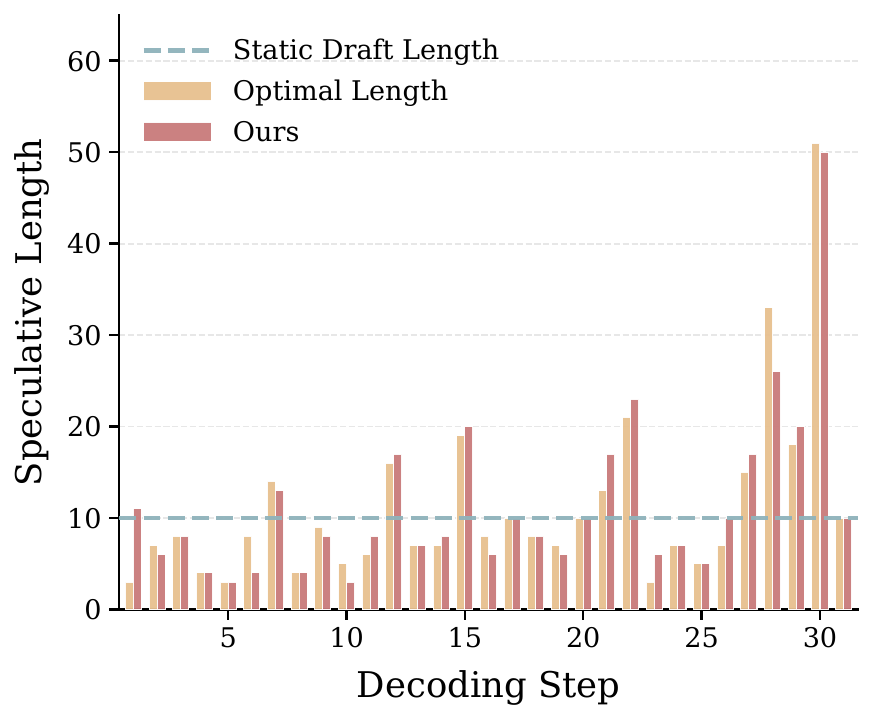}
        \caption{MT-Bench}
        \label{fig:sub4}
    \end{subfigure}

    \caption{Case Study of Dynamic Speculative Length Adjustment with LibraSpec across Four Benchmarks: a FastdLLM Draft Model and a Qwen2.5-7B Target Model.}
    \label{fig:case_study}
\end{figure}

\subsection{Case study of dynamic speculative length adjustment}

In this section, to examine how LibraSpec adapts the speculative length during decoding, Figure \ref{fig:case_study} visualizes decoding trajectories from four benchmarks, comparing three schedules: a static speculative length, the oracle optimal length obtained by exhaustive search, and the length selected by our LibraSpec.

The oracle speculative length varies substantially across decoding steps and depends strongly on the local generation context. This confirms the premise of Section \ref{sc:speedup} — the optimal gain–cost trade-off drifts with local content predictability, so any static length inevitably alternates between under-speculation and over-speculation. Despite having no access to the oracle schedule, LibraSpec closely tracks it using only the confidence and cost signals available online. Across all four examples, LibraSpec follows the oracle schedule with a mean absolute deviation of only 2.78 tokens — far below the 9.48 deviation tokens of the static length schedule — correctly anticipating both expansions and contractions. This is a direct consequence of Theorem \ref{thm:unimodal}: since the expected speedup is unimodal in the speculative length, each beneficial adjustment provably moves toward the current optimum. As the decoding context changes, repeatedly applying such adjustments allows LibraSpec to track the evolving oracle length without explicitly predicting the accepted length.

\section{Conclusion}

In this paper, we show that the recent shift toward diffusion-based drafters changes the optimization objective of dynamic speculative-length selection.
%In this paper, we reveal a fundamental shift in the primary bottleneck of speculative decoding when transitioning to diffusion-based draft models: 
%because parallel block generation renders drafting nearly cost-free, the critical challenge shifts from drafting overhead to verification efficiency. 
Because diffusion drafters generate candidate blocks in parallel at substantially lower marginal drafting cost, estimating the accepted length is no longer sufficient for maximizing end-to-end speedup.
We therefore formulate speculative-length selection as expected-speedup optimization and derive a marginal criterion that balances acceptance gain against verification cost.
%To address this, we establish a rigorous theoretical speedup framework and propose LibraSpec, the first theoretically grounded dynamic length speculative decoding strategy that explicitly balances the expected acceptance gain against verification costs. 
Based on this criterion, we develop \textsc{LibraSpec}, a training-free and plug-and-play algorithm that dynamically determines how many draft tokens should be verified.
%Characterized by its training-free and plug-and-play nature, LibraSpec can be seamlessly integrated into existing diffusion-based methods like FastdLLM and DFlash. 
We show that beneficial adjustments monotonically improve expected speedup and converge to a globally optimal speculative-length interval, while practical \textsc{LibraSpec} approximates these adjustments using drafter confidence scores.
Extensive experiments across mathematical reasoning, code generation, and general chat benchmarks demonstrate that \textsc{LibraSpec} consistently improves verification efficiency under both greedy and sampling settings, yielding a consistent and substantial end-to-end improvement over SOTA methods.

%%
%% The next two lines define the bibliography style to be used, and
%% the bibliography file.
\bibliographystyle{ACM-Reference-Format}
\bibliography{sample-base}

\newpage
\appendix

\section{Theoretical Proofs}

\subsection{Proof for Theorem \ref{thm:gain-cost}}
\label{proof:gain-cost}

\textbf{Theorem \ref{thm:gain-cost}.} A speculative length adjustment achieves a single beneficial adjustment if and only if the marginal benefit of the adjusted draft segment is more favorable than the current average benefit.
    
\begin{equation}
    \begin{cases}
        \displaystyle
        \frac{\mathbb{E}[\tau_{d:d'}]}{T^{\mathrm{verify}}_{d:d'}}
        >
        \frac{\mathbb{E}[\tau_d]}{T^{\mathrm{verify}}_d},
        & d<d', \\[1.5em]
        \displaystyle
        \frac{\mathbb{E}[\tau_{d':d}]}{T^{\mathrm{verify}}_{d':d}}
        <
        \frac{\mathbb{E}[\tau_d]}{T^{\mathrm{verify}}_d},
        & d>d'.
    \end{cases}
\end{equation}

where the first case ($d<d'$) corresponds to extending the draft sequence, requiring the marginal benefit of the appended draft segment to exceed the current average benefit, while the second case ($d>d'$) corresponds to truncating the draft sequence, requiring the marginal benefit of the removed suffix to be lower than the current average benefit.

\begin{proof}
    Substituting the definition of speedup from Definition~\ref{def:speedup}, we obtain: 
    
    \begin{equation}
        \begin{aligned}
            &\mathbb E[\eta_{d'}] > \mathbb E[\eta_d] \\[0.5em]
            &\begin{cases}
                \displaystyle
                \frac{(\mathbb{E}[\tau_d]+\mathbb{E}[\tau_{d:d'}])\, L_{\mathrm{target}}}{T^{\mathrm{verify}}_d+T^{\mathrm{verify}}_{d:d'}}
                >
                \frac{\mathbb{E}[\tau_d]\, L_{\mathrm{target}}}{T^{\mathrm{verify}}_d},
                & d<d', \\[1.5em]
                \displaystyle
                \frac{(\mathbb{E}[\tau_d]-\mathbb{E}[\tau_{d':d}])\, L_{\mathrm{target}}}{T^{\mathrm{verify}}_d-T^{\mathrm{verify}}_{d':d}}
                <
                \frac{\mathbb{E}[\tau_d]\, L_{\mathrm{target}}}{T^{\mathrm{verify}}_d},
                & d>d'.
            \end{cases} \\[0.5em]
            &\begin{cases}
                \displaystyle
                \frac{\mathbb{E}[\tau_d]+\mathbb{E}[\tau_{d:d'}]}{T^{\mathrm{verify}}_d+T^{\mathrm{verify}}_{d:d'}}
                >
                \frac{\mathbb{E}[\tau_d]}{T^{\mathrm{verify}}_d},
                & d<d', \\[1.5em]
                \displaystyle
                \frac{\mathbb{E}[\tau_d]-\mathbb{E}[\tau_{d':d}]}{T^{\mathrm{verify}}_d-T^{\mathrm{verify}}_{d':d}}
                <
                \frac{\mathbb{E}[\tau_d]}{T^{\mathrm{verify}}_d},
                & d>d'.
            \end{cases}
        \end{aligned}
    \end{equation}

    Using the ratio comparison identity, we obtain:

    \begin{equation}
        \begin{cases}
            \displaystyle
            \frac{\mathbb{E}[\tau_{d:d'}]}{T^{\mathrm{verify}}_{d:d'}}
            >
            \frac{\mathbb{E}[\tau_d]}{T^{\mathrm{verify}}_d},
            & d<d', \\[1.5em]
            \displaystyle
            \frac{\mathbb{E}[\tau_{d':d}]}{T^{\mathrm{verify}}_{d':d}}
            <
            \frac{\mathbb{E}[\tau_d]}{T^{\mathrm{verify}}_d},
            & d>d'.
        \end{cases}
    \end{equation}
\end{proof}

\subsection{Proof for Theorem \ref{thm:necessary-condition}}
\label{proof:necessary-condition}

\textbf{Theorem \ref{thm:necessary-condition}}. Under the strengthened position-wise criterion, A Single Beneficial Adjustment from $d$ to $d'$ \textbf{only if} it satisfies:

\begin{equation}
    \begin{aligned}
        d' &\le \epsilon_i, \quad \forall\, i \in [1, d], \\
        \epsilon_i &= \max\left\{d' \in \mathbb Z \mid d' < \frac{T_i^{\mathrm{verify}}}{\tau_i\,c} \, \sum_{j=i}^{d'}\prod_{k=i}^{j}p_k + i\right\}.
    \end{aligned}
\end{equation}

\begin{proof}
    Let $A_{i:j}=\{x_i,\ldots,x_j\text{ are all accepted}\},\ i\le j.$ Since acceptance must occur consecutively,
	
	\begin{equation}
		\Pr(A_{i:j}) = \prod_{k=i}^{j}p_k.
	\end{equation}

    The expected marginal increase in accepted draft tokens is given by
	
	\begin{equation}
		\mathbb E[\tau_{i:j}]
		=
		\sum_{k=i}^{j}\Pr(A_{i:k})
		=
		\sum_{k=i}^{j}\prod_{l=i}^{k}p_l,
	\end{equation}

    where the expectation follows from the linearity of expectation. Retaining the draft token at position $i$ incurs a wasted verification cost of

    \begin{equation}
        T_{i:j} = c\,(j-i).
    \end{equation}

    where $c$ denotes the average verification cost per speculative position. We strengthen the criterion of a single beneficial adjustment by requiring that the marginal benefit of the speculative suffix starting from position i exceeds the average benefit accumulated up to that position.

    \begin{equation}
        \frac{\mathbb{E}[\tau_{i:d'}]}{T^{\mathrm{verify}}_{i:d'}}
        >
        \frac{\tau_i}{T^{\mathrm{verify}}_i}, \, \forall\, i \in [1, d]. 
    \end{equation}

    Expanding the marginal gain and the marginal cost yields

    \begin{equation}
        \begin{aligned}
            \frac{\sum_{j=i}^{d'}\prod_{k=i}^{j}p_k}{c\,(d' - i)} &> 
            \frac{\tau_i}{T^{\mathrm{verify}}_i} \\
            T^{\mathrm{verify}}_i \, \sum_{j=i}^{d'}\prod_{k=i}^{j}p_k &> \tau_i\,c\,(d' - i) \\
            \frac{T_i^{\mathrm{verify}}}{\tau_i\,c} \, \sum_{j=i}^{d'}\prod_{k=i}^{j}p_k + i &> d'. \\
        \end{aligned}
    \end{equation}

    For a single beneficial adjustment, the condition must hold for every draft position $i$. Therefore,

    \begin{equation}
        \begin{aligned}
            d' &\le \epsilon_i, \quad \forall\, i \in [1, d], \\
            \epsilon_i &= \max\left\{d' \in \mathbb Z \mid d' < \frac{T_i^{\mathrm{verify}}}{\tau_i\,c} \, \sum_{j=i}^{d'}\prod_{k=i}^{j}p_k + i\right\}.
        \end{aligned}
    \end{equation}
\end{proof}

\subsection{Proof for Theorem \ref{thm:sufficient-condition}}
\label{proof:sufficient-condition}

\textbf{Theorem \ref{thm:sufficient-condition}}. A Single Beneficial Adjustment from $d$ to $d'$ \textbf{if} it satisfies:

\begin{equation}
    \begin{aligned}
        d' &= \min_i \epsilon_i, \quad \forall\, i \in [1, d], \\
        \epsilon_i &= \max\left\{d' \in \mathbb Z \mid d' < \frac{T_i^{\mathrm{verify}}}{\tau_i\,c} \, \sum_{j=i}^{d'}\prod_{k=i}^{j}p_k + i\right\}.
    \end{aligned}
\end{equation}

\begin{proof}
    \begin{equation}
        d' = \min_i \epsilon_i
    \end{equation}
    
    According to Theorem~\ref{thm:necessary-condition}, since $d'$ must satisfy $d' \le \epsilon_i$ for all $i$, it follows that

    \begin{equation}
        \begin{aligned}
            d' &\le \epsilon_i, \quad \forall\, i \in [1, d], \\
            \epsilon_i &= \max\left\{d' \in \mathbb Z \mid d' < \frac{T_i^{\mathrm{verify}}}{\tau_i\,c} \, \sum_{j=i}^{d'}\prod_{k=i}^{j}p_k + i\right\}.
        \end{aligned}
    \end{equation}

    For $\forall i \in [1,d]$,

    \begin{equation}
        \begin{aligned}
            d' &< \frac{T_i^{\mathrm{verify}}}{\tau_i\,c}  \, \sum_{j=i}^{d'}\prod_{k=i}^{j}p_k + i \\
            \frac{T_i^{\mathrm{verify}}}{\tau_i} &> \frac{c\,(d'-i)}{\sum_{j=i}^{d'}\prod_{k=i}^{j}p_k} \\
            \frac{\mathbb{E}[\tau_{i:d'}]}{T^{\mathrm{verify}}_{i:d'}} &> \frac{\tau_i}{T^{\mathrm{verify}}_i}. 
        \end{aligned}
    \end{equation}

    In particular, substituting $i = d$, 

    \begin{equation}
        \begin{cases}
            \displaystyle
            \frac{\mathbb{E}[\tau_{d:d'}]}{T^{\mathrm{verify}}_{d:d'}}
            >
            \frac{\mathbb{E}[\tau_d]}{T^{\mathrm{verify}}_d},
            & d<d', \\[1.5em]
            \displaystyle
            \frac{\mathbb{E}[\tau_{d':d}]}{T^{\mathrm{verify}}_{d':d}}
            <
            \frac{\mathbb{E}[\tau_d]}{T^{\mathrm{verify}}_d},
            & d>d'.
        \end{cases}
    \end{equation}

    In this case, a single beneficial adjustment can be achieved.
\end{proof}

\subsection{Proof for Theorem \ref{thm:unimodal}}
\label{proof:unimodal}

\textbf{Theorem \ref{thm:unimodal}}. The expected speedup is a unimodal function of the speculative length.

\begin{equation}
    \frac{\mathbb{E}[\tau_{d+1}]}{T^{\mathrm{verify}}_{d+1}}
    >
    \frac{\mathbb{E}[\tau_d]}{T^{\mathrm{verify}}_d} 
    \Longleftrightarrow 
    \prod_{i = 1}^{d+1}p_i
    >
    c\,\frac{\mathbb{E}[\tau_d]}{T^{\mathrm{verify}}_d}
\end{equation}

\begin{proof}
    Since $0\le p_i\le 1$, the product $\prod_{i = 1}^{d}p_i$ is monotonically non-increasing with respect to $d$.

    If the marginal verification cost incurred by adding each speculative position is a constant $c$, then there exists a threshold $d$ such that:

    \begin{equation}
        \frac{\prod_{i = 1}^{d+1}p_i}{c} > \frac{\mathbb{E}[\tau_d]}{T^{\mathrm{verify}}_d}
    \end{equation}

    By Theorem~\ref{thm:gain-cost}, it follows that

    \begin{equation}
        \frac{\mathbb{E}[\tau_{d+1}]}{T^{\mathrm{verify}}_{d+1}}
        >
        \frac{\mathbb{E}[\tau_d]}{T^{\mathrm{verify}}_d} 
        \Longleftrightarrow
        \prod_{i = 1}^{d+1}p_i
        >
        c\,\frac{\mathbb{E}[\tau_d]}{T^{\mathrm{verify}}_d}.
    \end{equation}

    Moreover, its contrapositive also holds.

    \begin{equation}
        \frac{\mathbb{E}[\tau_{d+1}]}{T^{\mathrm{verify}}_{d+1}}
        \le
        \frac{\mathbb{E}[\tau_d]}{T^{\mathrm{verify}}_d} 
        \Longleftrightarrow
        \prod_{i = 1}^{d+1}p_i
        \le
        c\,\frac{\mathbb{E}[\tau_d]}{T^{\mathrm{verify}}_d}
    \end{equation}

    This immediately implies that the expected decoding speedup is a unimodal function of the speculative length.
\end{proof}

\subsection{Proof for Theorem \ref{thm:convergence}}
\label{proof:convergence}

\textbf{Theorem \ref{thm:convergence}}. By iteratively applying single beneficial adjustments, the speculative length converges in finitely many steps to $d'$, which lies in the globally optimal interval of speculative lengths.

\begin{equation}
    \mathbb E[\eta_{d'}]
    =
    \max_{d\in\mathbb Z_{>0}}\mathbb E[\eta_{d}].
\end{equation}

\begin{proof}
    Choose

    \begin{equation}
        d^\star\in
    	\operatorname*{arg\,max}_{d\in\mathbb Z_{>0}}
    	\mathbb E[\eta_d],
    \end{equation}

    Let $\{d^{(k)}\}_{k\ge0}$ denote the sequence of beneficial adjustments. By Definition~\ref{def:speedup},

    \begin{equation}
        \mathbb E[\eta_{d_{k+1}}]
    	>
    	\mathbb E[\eta_{d_k}]
    	\Longrightarrow
    	\mathbb E[\eta_{d_\ell}]
    	>
    	\mathbb E[\eta_{d_k}],
    	\qquad \forall\,\ell>k.
    \end{equation}

    By Theorem~\ref{thm:unimodal},

    \begin{equation}    
    	\begin{aligned}
    		k<\ell,\quad d_k,d_\ell<d^\star
    		&\Longrightarrow d_k<d_\ell,\\
    		k<\ell,\quad d_k,d_\ell>d^\star
    		&\Longrightarrow d_k>d_\ell.
    	\end{aligned}
    \end{equation}

    Let, 

    \begin{equation}
        I_-=\{k:d_k<d^\star\},
    	\qquad
    	I_+=\{k:d_k>d^\star\}.
    \end{equation}

    Then

    \begin{equation}
        |I_-|\le d^\star-1,
    	\qquad
    	I_+\neq\varnothing
    	\Longrightarrow
    	|I_+|
    	\le d_{\min I_+}-d^\star
    	<\infty.
    \end{equation}

    Hence, the sequence reaches the globally optimal interval after
	finitely many beneficial adjustments. At the terminal length $d'$,

    \begin{equation}
        \nexists\,d\in\mathbb Z_{>0}:\mathbb E[\eta_d]>\mathbb E[\eta_{d'}].
    \end{equation}

    Therefore,
    
    \begin{equation}
        \mathbb E[\eta_{d'}]
        =
        \max_{d\in\mathbb Z_{>0}}\mathbb E[\eta_{d}].
    \end{equation}
\end{proof}

\section{Baseline and Benchmark Details}
\label{sc:config-details}

\subsection{Benchmark Details}

Table \ref{tab:benchmark_example_count} lists the number of evaluated examples for each dataset. We follow the DFlash \citep{dflash-chen} benchmark setup for these sample counts.

\begin{table}[htbp]
	\centering
	\caption{Number of evaluated examples per dataset in the benchmark suite.}
	\label{tab:benchmark_example_count}
%	\resizebox{\textwidth}{!}{
		\begin{tabular}{lc}
			\toprule
			Dataset & Examples \\
			\midrule
			MATH-500 \citep{math500-hunter} & 128 \\
			GSM8K \citep{gsm8k-cobbe} & 128 \\
			HumanEval \citep{humaneval-chen} & 164 \\
			MT-Bench \citep{mtbench-zheng} & 80 \\
			\bottomrule
		\end{tabular}
%	}
\end{table}

\subsection{Baseline Details}

\textbf{EAGLE-3}. All EAGLE-3 baselines are run using vLLM v0.13.0 \citep{vllm-kwon}. We run an extensive parameter sweep of speculative length from 3 to 20, and pick a different speculation length that yielded the best speedup for each model-dataset pair. For comparisons with EAGLE-3 on Qwen2.5 models, we use the checkpoints released by FailFast \citep{failfast-pan}.

\textbf{FastdLLM}. All FastdLLM baselines are run using Fast-dLLM-v2-1.5B \citep{fastdllmv2-wu}. we also conduct an extensive parameter sweep of speculative length from 3 to 20, and pick a different speculative length that achieves the best speedup for each model-dataset pair. When combined with LibraSpec, the initial speculative length is set to 10, while the maximum speculative length is set to 60.

\textbf{FailFast}. For the hyperparameters in FailFast, we adopt the optimal configurations reported in the original paper. Specifically, we set $\tau=\{0.5,0.45,0.4\}$ for the Qwen2.5-\{7,14,32\}B-Instruct target models, respectively. The maximum speculative length is set to 80, following the original paper, while the initial speculative length is set to 10.

\textbf{G4-style}. We adapt the dynamic speculative-length control strategy used in Gemma 4 MTP to the FastdLLM. Following the original heuristic, the speculative length for the next decoding round is increased by 2 if all drafted tokens are accepted in the current round, and decreased by 1 whenever at least one drafted token is rejected.

\textbf{DFlash}. All DFlash experiments use a default initial speculative length (block size) of 16. When integrated with LibraSpec, the maximum speculative length is set to 24.

\textbf{DDTree}. All DDTree experiments also use an initial block size of 16. We search the node budget over $\{64, 128, 256, 512\}$ and report the configuration that achieves the highest decoding speedup. When integrated with LibraSpec, the maximum speculative length is likewise set to 24.

\end{document}